\newcommand{\mysec}[1]{Section~\ref{sec:#1}}
\newcommand{\eq}[1]{Eq.~(\ref{eq:#1})}
\newcommand{\myfig}[1]{Figure~\ref{fig:#1}}
\newcommand{\BEAS}{\begin{eqnarray*}}
\newcommand{\EEAS}{\end{eqnarray*}}
\newcommand{\BEA}{\begin{eqnarray}}
\newcommand{\EEA}{\end{eqnarray}}
\newcommand{\BEQ}{\begin{equation}}
\newcommand{\EEQ}{\end{equation}}
\newcommand{\BIT}{\begin{itemize}}
\newcommand{\EIT}{\end{itemize}}
\newcommand{\BNUM}{\begin{enumerate}}
\newcommand{\ENUM}{\end{enumerate}}
\newcommand{\diag}{\mathop{\bf diag}} 
\newcommand{\BA}{\begin{array}}
\newcommand{\EA}{\end{array}} 
\newcommand{\eg}{{\it e.g.}}
\newcommand{\ones}{\mathbf 1}
\newcommand{\reals}{{\mbox{\bf R}}}
\newcommand{\symm}{{\mbox{\bf S}}}  
\newcommand{\idm}{{\bf I}}
\newcommand{\Rank}{\mathop{\bf Rank}}
\newcommand{\Card}{\mathop{\bf Card}}
\newcommand{\Tr}{\mathop{\bf Tr}}
\newcommand{\lambdamax}{{\lambda_{\rm max}}}
\newcommand{\dsp}{\displaystyle}
\newcommand{\argmax}{\mathop{\rm argmax}}
\begin{document}

\title{Optimal Solutions for \\Sparse Principal Component Analysis}

\author{\name Alexandre d'Aspremont   \email aspremon@Princeton.edu \\
 \addr
ORFE, Princeton University, \\
Princeton, NJ 08544, USA.\\
 \AND
 \name
Francis Bach \email francis.bach@mines.org \\
\addr
INRIA - Willow project\\
D\'epartement d'Informatique, Ecole Normale Sup\'erieure\\
45, rue d'Ulm, 75230 Paris, France\\
\AND
\name
Laurent El Ghaoui \email elghaoui@eecs.Berkeley.edu \\
\addr
EECS Department, U.C. Berkeley,\\
Berkeley, CA 94720, USA.
}

\editor{}
\maketitle

\begin{abstract}
Given a sample covariance matrix, we examine the problem of maximizing the variance explained by a linear combination of the input variables while constraining the number of nonzero coefficients in this combination. This is known as sparse principal component analysis and has a wide array of applications in machine learning and engineering. We formulate a new semidefinite relaxation to this problem and derive a greedy algorithm that computes a \emph{full set} of good solutions for all target numbers of non zero coefficients, with total complexity $O(n^3)$, where $n$ is the number of variables. We then use the same relaxation to derive sufficient conditions for global optimality of a solution, which can be tested in $O(n^3)$ per pattern. We discuss applications in subset selection and sparse recovery and show on artificial examples and biological data that our algorithm does provide globally optimal solutions in many cases. 
\end{abstract}

\begin{keywords}
PCA, subset selection, sparse eigenvalues, sparse recovery, lasso.
\end{keywords}

\section{Introduction}
Principal component analysis (PCA) is a classic tool for data analysis, visualization or compression and has a wide range of applications throughout science and engineering. Starting from a multivariate data set, PCA finds linear combinations of the variables called \emph{principal components}, corresponding to orthogonal directions maximizing variance in the data.
Numerically, a full PCA involves a singular value decomposition of the data matrix.

One of the key shortcomings of PCA is that the factors are linear combinations of \emph{all} original variables; that is, most of factor coefficients (or loadings) are non-zero.  This means that while PCA facilitates model interpretation and visualization by concentrating the information in a few factors, the factors themselves are still constructed using all variables, hence are often hard to interpret.

In many applications, the coordinate axes involved in the factors have a direct physical interpretation. In financial or  biological applications, each axis might correspond to a specific asset or gene. In problems such as these, it is natural to seek a trade-off between the two goals of \emph{statistical fidelity} (explaining most of the variance in the data) and \emph{interpretability} (making sure that the factors involve only a few coordinate axes).  Solutions that have only a few nonzero coefficients in the principal components are usually easier to interpret. Moreover, in some applications, nonzero coefficients have a direct cost (\eg, transaction costs in finance) hence there may be a direct trade-off between statistical fidelity and practicality. Our aim here is to efficiently derive \emph{sparse principal components}, i.e, a set of sparse vectors that explain a maximum amount of  variance. Our belief is that in many applications, the decrease  in statistical fidelity required to obtain sparse factors is  small and relatively benign. 

In what follows, we will focus on the problem of finding sparse factors which explain a maximum amount of variance, which can be written:
\BEQ \label{eq:spca-intro}
\max_{ \| z \|  \leq 1 } z^T \Sigma z - \rho \Card(z)
\EEQ
in the variable $z\in\reals^n$, where $\Sigma\in\symm_n$ is the (symmetric positive semi-definite) sample covariance matrix, $\rho$ is a parameter controlling sparsity, and $\Card(z)$ denotes the cardinal (or $\ell_0$ norm) of $z$, i.e. the number of non zero coefficients of $z$.

While PCA is numerically easy, each factor requires computing a leading eigenvector, which can be done in $O(n^2)$, sparse PCA is a hard combinatorial problem. In fact, \citet{sparseLDA} show that the subset selection problem for ordinary least squares, which is NP-hard~\citep{nphard-subsetselection}, can be reduced to a sparse generalized eigenvalue problem, of which sparse PCA is a particular intance. Sometimes ad hoc ``rotation'' techniques are used to post-process the results from PCA and find interpretable directions underlying a particular subspace (see \citet{Joll95}). Another simple solution is to \emph{threshold} the loadings with small absolute value to zero \citep{cadi95}. A more systematic approach to the problem arose in recent years, with various researchers proposing nonconvex algorithms  (e.g., SCoTLASS by \citet{Joll03}, SLRA by \citet{Zhan02a} or D.C. based methods \citep{Srip07} which find modified principal components  with zero loadings. The SPCA algorithm, which is based on the representation of PCA as a regression-type optimization problem~\citep{Zou04}, allows the application of the LASSO \citep{tibs96}, a penalization technique based on the $\ell_1$ norm. With the exception of simple thresholding, all the algorithms above require solving non convex problems. Recently also, \citet{dasp04a} derived an $\ell_1$ based semidefinite relaxation for the sparse PCA problem (\ref{eq:spca-intro}) with a complexity of $O(n^{4}\sqrt{\log n})$ for a given $\rho$. Finally, \citet{Mogh06b} used greedy search and branch-and-bound methods to solve small instances of problem (\ref{eq:spca-intro}) exactly and get good solutions for larger ones. Each step of this greedy algorithm has complexity $O(n^3)$, leading to a total complexity of $O(n^4)$ for a full set of solutions.

Our contribution here is twofold. We first derive a greedy algorithm for computing a \emph{full set} of good solutions (one for each target sparsity between 1 and $n$) at a total numerical cost of $O(n^3)$ based on the convexity of the of the largest eigenvalue of
a symmetric matrix. We then derive \emph{tractable} sufficient conditions for a vector $z$ to be a \emph{global} optimum of (\ref{eq:spca-intro}). This means in practice that, given a vector $z$ with support $I$, we can test if $z$ is a globally optimal solution to problem (\ref{eq:spca-intro}) by performing a few binary search iterations to solve a one dimensional convex minimization problem. In fact, we can take any sparsity pattern candidate from any algorithm and test its optimality. This paper builds on the earlier conference version~\citep{fullpathPCA}, providing new and simpler conditions for optimality and describing applications to subset selection and sparse recovery.

While there is certainly a case to be made for $\ell_1$ penalized maximum eigenvalues (\`a la \citet{dasp04a}), we strictly focus here on the $\ell_0$ formulation. However, it was shown recently (see \citet{Cand05}, \cite{Dono05} or \cite{Mein06a} among others) that there is in fact a deep connection between $\ell_0$ constrained extremal eigenvalues and LASSO type variable selection algorithms. Sufficient conditions based on sparse eigenvalues (also called restricted isometry constants in \citet{Cand05}) guarantee consistent variable selection (in the LASSO case) or sparse recovery (in the decoding problem). The results we derive here produce upper bounds on sparse extremal eigenvalues and can thus be used to prove consistency in LASSO estimation, prove perfect recovery in sparse recovery problems, or prove that a particular solution of the subset selection problem is optimal. Of course, our conditions are only sufficient, not necessary (which would contradict the NP-Hardness of subset selection) and the duality bounds we produce on sparse extremal eigenvalues cannot always be tight, but we observe that the duality gap is often small. 

The paper is organized as follows. We begin by formulating the sparse PCA problem in Section \ref{sec:spca}. In Section \ref{sec:greedy}, we write an efficient algorithm for computing a full set of candidate solutions to problem (\ref{eq:spca-intro}) with total complexity $O(n^3)$. In \mysec{semidefinite} we then formulate a convex relaxation for the sparse PCA problem, which we use in Section \ref{sec:tightness} to derive tractable sufficient conditions for the global optimality of a particular sparsity pattern. In Section \ref{sec:apps} we detail applications to subset selection, sparse recovery and variable selection. Finally, in Section \ref{sec:numer}, we test the numerical performance of these results. 

\subsection*{Notation}
For a vector $z\in\reals$, we let $\|z\|_1 = \sum_{i=1}^n |z_i|$ and $\|z\| = \left( \sum_{i=1}^n z_i^2 \right)^{1/2}$, $\Card(z)$ is the cardinality of $z$, i.e. the number of nonzero coefficients of $z$, while the support $I$ of $z$ is the set $\{i:~z_i\neq 0\}$ and we use $I^c$ to denote its complement. For $\beta\in\reals$, we write $\beta_+ = \max \{\beta,0\}$ and for $X\in\symm_n$ (the set of symmetric matrix of size $n\times n$) with eigenvalues $\lambda_i$, $\Tr(X)_+=\sum_{i=1}^n \max\{\lambda_i,0\}$. The vector of all ones is written $\ones$, while the identity matrix is written $\idm$. The diagonal matrix with the vector $u$ on the diagonal is written $\diag(u)$.

\section{Sparse PCA}
\label{sec:spca} Let $\Sigma\in\symm_n$ be a symmetric matrix. We consider the following sparse PCA problem:
\BEQ \label{eq:pca-card}
\phi(\rho) \equiv \max_{ \| z \|  \leq 1 } z^T \Sigma z - \rho \Card(z)
\EEQ
in the variable $z\in\reals^n$ where $\rho >0$ is a parameter controlling sparsity. We assume without loss of generality that $\Sigma\in\symm_n$ is positive semidefinite and that the $n$ variables are ordered by decreasing marginal variances, i.e.
that $\Sigma_{11} \geq \ldots \geq \Sigma_{nn}$. We also assume that we are given a square root $A$ of the matrix $\Sigma$ with $ \Sigma = A^T A$, where $A\in\reals^{n \times n}$ and we denote by $a_1,\dots,a_n \in \reals^n$ the columns of $A$. Note that the problem and our algorithms are invariant by permutations of $\Sigma$ and by the choice of square root $A$. In practice, we are very often given the data matrix $A$ instead of the covariance $\Sigma$.

A problem that is directly related to (\ref{eq:pca-card}) is that of computing a cardinality constrained maximum eigenvalue, by solving:
\BEQ\label{eq:card-eig}
\BA{ll}
\mbox{maximize} & z^T \Sigma z\\
\mbox{subject to} & \Card(z) \leq k\\
& \|z\|=1,
\EA\EEQ
in the variable $z\in\reals^n$. Of course, this problem and (\ref{eq:pca-card}) are related. By duality, an upper bound on the optimal value of (\ref{eq:card-eig}) is given by:
\[
\inf_{\rho \in P} \phi(\rho) + \rho k.
\]
where $P$ is the set of penalty values for which $\phi(\rho)$ has been computed. This means in particular that if a point $z$ is provably optimal for (\ref{eq:pca-card}), it is also globally optimum for (\ref{eq:card-eig}) with $k=\Card(z)$.

We now begin by reformulating (\ref{eq:pca-card}) as a relatively simple convex maximization problem. Suppose that $\rho \geq \Sigma_{11}$. Since $z^T\Sigma z\leq \Sigma_{11}(\sum_{i=1}^n |z_i| )^2$ and $(\sum_{i=1}^n |z_i| )^2\leq \|z\|^2\Card(z)$ for all $z\in\reals^n$, we always have:
\[\BA{ll}
\phi(\rho) &= \max_{ \| z \|  \leq  1 } z^T \Sigma z - \rho \Card(z) \\
&\leq (\Sigma_{11}-\rho) \Card(z)\\  
&\leq 0,
\EA\]
hence the optimal solution to (\ref{eq:pca-card}) when $\rho \geq \Sigma_{11}$ is $z=0$. From now on, we assume $\rho \leq \Sigma_{11}$ in which case the inequality $\|z\|\leq1$ is tight. We can represent the sparsity pattern of a vector $z$ by a vector $u \in \{0,1\}^n$ and rewrite (\ref{eq:pca-card}) in the equivalent form:
\[\BA{ll}
\phi(\rho) & = \dsp \max_{u \in \{0,1\}^n } \lambdamax(\diag(u)\Sigma\diag(u)) - \rho \ones^Tu\\
& = \dsp \max_{u \in \{0,1\}^n } \lambdamax(\diag(u)A^TA\diag(u)) - \rho \ones^Tu\\
& = \dsp \max_{u \in \{0,1\}^n } \lambdamax(A\diag(u)A^T) - \rho \ones^Tu,
\EA\]
using the fact that $\diag(u)^2=\diag(u)$ for all variables $u \in \{0,1\}^n $
and that for any matrix $B$, $\lambdamax(B^T B) = \lambdamax(BB^T)$. We then have:
\[\BA{ll}
\phi(\rho) & = \dsp \max_{u \in \{0,1\}^n } \lambdamax(A\diag(u)A^T) - \rho \ones^Tu\\
& = \dsp \max_{ \| x \| = 1}~\max_{u \in \{0,1\}^n} x^TA\diag(u)A^Tx- \rho \ones^Tu\\
& = \dsp \max_{ \| x \| = 1}~\max_{u \in \{0,1\}^n}  \sum_{i=1}^n
u_i( (a_i^T x)^2 - \rho ).
\EA\]
Hence we finally get, after maximizing in $u$ (and using $\max_{v \in \{0,1\}} \beta v = \beta_+$):
\BEQ \label{eq:pca-ncvx}
\phi(\rho) = \max_{ \| x \| = 1} \sum_{i=1}^n((a_i^Tx)^2-\rho)_+,
\EEQ
which is a nonconvex problem in the variable $x\in\reals^n$. We then select variables $i$ such that $(a_i^Tx)^2-\rho>0$. Note that if $\Sigma_{ii} = a_i^Ta_i < \rho$, we must have $(a_i^Tx)^2 \leq \|a_i\|^2 \|x\|^2 < \rho$ hence variable $i$ will never be part of the optimal subset and we can remove it.

\section{Greedy Solutions}
\label{sec:greedy} In this section, we focus on finding a good solution to problem (\ref{eq:pca-card}) using greedy methods. We first present very simple preprocessing solutions with complexity $O(n\log n)$ and $O(n^2)$. We then recall a simple greedy algorithm with complexity $O(n^4)$. Finally, our first contribution in this section is to derive an approximate greedy algorithm that computes a full set of (approximate) solutions for problem (\ref{eq:pca-card}), with total complexity $O(n^3)$.

\subsection{Sorting and Thresholding}
The simplest ranking algorithm is to sort the diagonal of the matrix $\Sigma$ and rank the variables by variance. This works intuitively because the diagonal is a rough proxy for the eigenvalues: the Schur-Horn theorem states that the diagonal of a matrix majorizes its  eigenvalues \citep{Horn85}; sorting costs $O(n\log n)$. Another quick solution is to compute the leading eigenvector of $\Sigma$ and form a sparse vector by thresholding to zero the coefficients whose magnitude is smaller than a certain level. This can be done with cost $O(n^2)$.

\subsection{Full greedy solution}
\label{sec:greedy-classic} Following \citet{Mogh06b}, starting from an initial solution of cardinality one at $\rho=\Sigma_{11}$, we can update an increasing sequence of index sets $I_k\subseteq[1,n]$, scanning all the remaining variables to find the index with maximum variance contribution. 
\\ 
\line(1,0){432}
\vskip 0ex
\textbf{Greedy Search Algorithm.}
\BIT
\item {\bf Input}: $\Sigma \in \reals^{n \times n}$
\item {\bf Algorithm}:
\BNUM
\item Preprocessing: sort variables by decreasing diagonal elements and permute elements of $\Sigma$ accordingly. Compute the Cholesky decomposition $\Sigma = A^T A $.
\item Initialization: $I_1 = \{1\}$,  $x_1=a_1/\|a_1\|$.
\item Compute $i_k = \argmax_{i \notin I_k}  \lambdamax\left(
\sum_{j\in I_k \cup \{ i \} } a_j a_j^T \right)$.
\item Set $I_{k+1}=I_k\cup\{i_k\}$ and compute $x_{k+1}$ as the leading eigenvector of $\sum_{ j \in I_{k+1}}a_ja_j^T$.
\item Set $k=k+1$. If $k<n$ go back to step 3.
\ENUM
\item {\bf Output}: sparsity patterns $I_k$.
\EIT
\vskip -2ex
\line(1,0){432}
\vskip 0ex
At every step, $I_k$ represents the set of nonzero elements (or sparsity pattern) of the current point and we can define $z_k$ as the solution to problem (\ref{eq:pca-card}) given $I_k$, which is:
\[
z_k=\argmax_{ \{z_{I_k^c}=0,~\|z\|=1 \}} z^T \Sigma z - \rho k,\\
\]
which means that $z_k$ is formed by padding zeros to the leading eigenvector of the submatrix $\Sigma_{I_k,I_k}$. Note that the entire algorithm can be written in terms of a factorization $\Sigma = A^T A$ of the matrix $\Sigma$, which means significant computational savings when $\Sigma$ is given as a Gram matrix. The matrices $\Sigma_{I_k,I_k}$ and $\sum_{i \in I_k} a_i a_i^T$ have the same eigenvalues and their eigenvectors are transformed of each other through the matrix $A$, i.e., if $z$ is an eigenvector of $\Sigma_{I_k,I_k}$, then $ A_{I_k} z / \|A_{I_k} z \|$ is an eigenvector of $A_{I_k} A_{I_k}^T$.

\subsection{Approximate greedy solution}
\label{sec:greedy-path} Computing $n-k$ eigenvalues at each iteration is costly and we can use the fact that $uu^T$ is a subgradient of $\lambdamax$ at $X$ if $u$ is a leading eigenvector of $X$~\citep{Boyd03}, to get: 
\BEQ
\label{eq:lambdamax-lower-bound}
\lambdamax\left(
\sum_{j\in I_k \cup \{ i \} } a_j a_j^T \right)
\geq 
\lambdamax\left(
\sum_{j\in I_k } a_j a_j^T \right)
+ (x_k^T a_i)^2,
\EEQ
which means that the variance is increasing by at least $(x_k^T a_i)^2$ when variable $i$ is added to $I_k$. This provides  a lower bound on the objective which does not require finding $n-k$ eigenvalues at each iteration. We then derive the following algorithm:\\
\line(1,0){432}
\vskip 0ex
\textbf{Approximate Greedy Search Algorithm.}
\BIT
\item {\bf Input}: $\Sigma \in \reals^{n \times n}$
\item {\bf Algorithm}:
\BNUM
\item Preprocessing. Sort variables by decreasing diagonal elements and permute elements of $\Sigma$ accordingly. Compute the Cholesky decomposition $\Sigma = A^T A $.
\item Initialization: $I_1 = \{1\}$,  $x_1=a_1/\|a_1\|$.
\item Compute $i_k = \argmax_{i \notin I_k}  (x_k^T a_i)^2 $
\item Set $I_{k+1}=I_k\cup\{i_k\}$ and compute $x_{k+1}$ as the leading eigenvector of $\sum_{ j \in I_{k+1}}a_j a_j^T$.
\item Set $k=k+1$. If $k<n$ go back to step 3.
\ENUM
\item {\bf Output}: sparsity patterns $I_k$.
\EIT
\vskip -2ex
\line(1,0){432}
\vskip 0ex
Again, at every step, $I_k$ represents the set of nonzero elements (or sparsity pattern) of the current point and we can define $z_k$ as the solution to problem (\ref{eq:pca-card}) given $I_k$, which is:
\[
z_k=\argmax_{ \{z_{I_k^c}=0,~\|z\|=1 \}} z^T \Sigma z - \rho k,\\
\]
which means that $z_k$ is formed by padding zeros to the leading eigenvector of the submatrix $\Sigma_{I_k,I_k}$.  Better points can be found by testing the variables corresponding to the $p$ largest values of $(x_k^T a_i)^2$ instead of picking only the best one.

\subsection{Computational Complexity}
The complexity of computing a greedy regularization path using the classic greedy algorithm in \mysec{greedy-classic} is $O(n^4)$: at each step $k$, it computes $(n-k)$ maximum eigenvalue of matrices with size $k$. The approximate algorithm in \mysec{greedy-path} computes a full path in $O(n^3)$: the first Cholesky decomposition is $O(n^3)$, while the complexity of the $k$-th iteration is $O(k^2)$ for the maximum eigenvalue problem and $O(n^2)$ for computing all products $(x^T a_j)$. Also, when the matrix $\Sigma$ is directly given as a Gram matrix $A^T A$ with $A\in\reals^{q \times n}$ with $q<n$, it is advantageous to use $A$ directly as the square root of $\Sigma$ and the total complexity of getting the path up to cardinality $p$ is then reduced to $O(p^3 + p^2 n)$ (which is $O(p^3)$ for the eigenvalue problems and $O(p^2 n )$ for computing the vector products).

\section{Convex Relaxation}
\label{sec:semidefinite} In \mysec{spca}, we showed that the original sparse PCA problem (\ref{eq:pca-card}) could also be written as in (\ref{eq:pca-ncvx}):
\[
\phi(\rho) = \max_{ \| x \| = 1} \sum_{i=1}^n((a_i^Tx)^2-\rho)_+.
\]
Because the variable $x$ appears solely through $X=xx^T$, we can reformulate the problem in terms of $X$ only, using the fact that when $\|x\|=1$, $X=xx^T$ is equivalent to $\Tr(X)=1$, $X\succeq 0$ and $\Rank(X)=1$. We thus  rewrite (\ref{eq:pca-ncvx}) as:
\[\BA{lll}
\phi(\rho) =&\mbox{max.}&  \sum_{i=1}^n(a_i^TXa_i-\rho)_+ \\
&\mbox{s.t.} & \Tr(X)=1,~\Rank(X)=1\\
& & X\succeq 0.\\
\EA\]
Note that because we are maximizing a convex function over the convex set (spectahedron) $\Delta_n=\{X\in\symm_n:~\Tr(X)=1,~ X\succeq 0\}$, the solution must be an extreme point of $\Delta_n$ (i.e. a rank one matrix), hence we can drop the rank constraint here. Unfortunately, $X \mapsto (a_i^TXa_i-\rho)_+$, the function we are \emph{maximizing}, is convex in $X$ and not concave, which means that the above problem is still hard. However, we show below that on rank one elements of $\Delta_n$, it is also equal to a concave function of $X$, and we use this to produce a semidefinite relaxation of problem (\ref{eq:pca-card}).

\begin{proposition} \label{prop:relax} Let $A\in\reals^{n \times n}$, $\rho \geq 0$ and denote by $a_1,\dots,a_n \in \reals^n$ the columns of $A$, an upper bound on:
\BEQ\label{eq:pca-plus}\BA{lll}
\phi(\rho) =&\mbox{max.}&  \sum_{i=1}^n(a_i^TXa_i-\rho)_+ \\
&\mbox{s.t.} & \Tr(X)=1,~ X\succeq 0, ~\Rank(X)=1\EA\EEQ
can be computed by solving
\BEQ  \label{eq:pca-relax}
\BA{lll}
\psi(\rho) =&\mbox{max.}&  \sum_{i=1}^n \Tr(X^{1/2}B_i X^{1/2})_+ \\
&\mbox{s.t.} & \Tr(X)=1,~X\succeq 0.
\EA\EEQ
in the variables $X\in\symm_n$, where $B_i=a_ia_i^T - \rho\idm$, or also:
\BEQ \label{eq:primal-p}
\BA{lll}
\psi(\rho) = &\mbox{max.}&  \sum_{i=1}^n\Tr(P_iB_i) \\
&\mbox{s.t.} & \Tr(X)=1,~X\succeq 0,~X\succeq P_i \succeq 0,
\EA\EEQ 
which is a semidefinite program in the variables $X\in\symm_n,~P_i\in\symm_n$.
\end{proposition}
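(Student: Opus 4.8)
The plan is to prove the two assertions separately: first that $\psi(\rho)\geq\phi(\rho)$, where $\psi(\rho)$ denotes the common value of (\ref{eq:pca-relax})--(\ref{eq:primal-p}); and second that problems (\ref{eq:pca-relax}) and (\ref{eq:primal-p}) indeed have the same value and that (\ref{eq:primal-p}) is a semidefinite program.

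For the upper bound, the idea is that (\ref{eq:pca-relax}) is nothing but (\ref{eq:pca-plus}) with the constraint $\Rank(X)=1$ dropped, \emph{provided} the two objectives agree on rank-one feasible points. On a rank-one $X=xx^T$ with $\Tr(X)=1$ (hence $\|x\|=1$) we have $X^{1/2}=X=xx^T$, so $X^{1/2}B_iX^{1/2}=(x^TB_ix)\,xx^T$ has $x^TB_ix=(a_i^Tx)^2-\rho$ as its only nonzero eigenvalue, and therefore $\Tr(X^{1/2}B_iX^{1/2})_+=\big((a_i^Tx)^2-\rho\big)_+$. Summing over $i$, the objective of (\ref{eq:pca-relax}) evaluated at $X=xx^T$ equals that of (\ref{eq:pca-plus}) at the same point; since every rank-one $X\in\Delta_n$ is feasible for (\ref{eq:pca-relax}), we get $\psi(\rho)\geq\phi(\rho)$.

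For the equivalence of (\ref{eq:pca-relax}) and (\ref{eq:primal-p}), note that in (\ref{eq:primal-p}) the matrices $P_i$ interact only through $X$, so for fixed $X\succeq 0$ the problem decouples across $i$ and it suffices to establish, for any $B\in\symm_n$,
\BEQ\label{eq:key-lemma}
\max\{\Tr(PB):~X\succeq P\succeq 0\}=\Tr(X^{1/2}BX^{1/2})_+.
\EEQ
The workhorse is the elementary variational identity $\max\{\Tr(QM):~0\preceq Q\preceq\idm\}=\Tr(M)_+$ for $M\in\symm_n$: writing $M_+$ for the positive part of $M$, the orthogonal projector onto $\range(M_+)$ attains the value $\Tr(M_+)=\Tr(M)_+$, while for any $0\preceq Q\preceq\idm$ one has $\Tr(QM)\leq\Tr(QM_+)\leq\Tr(M_+)$, using $M\preceq M_+$ together with $Q\succeq 0$, then $Q\preceq\idm$ together with $M_+\succeq 0$. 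When $X\succ 0$, the substitution $P=X^{1/2}QX^{1/2}$ is a bijection between $\{P:~X\succeq P\succeq 0\}$ and $\{Q:~0\preceq Q\preceq\idm\}$, and $\Tr(PB)=\Tr(QX^{1/2}BX^{1/2})$, so (\ref{eq:key-lemma}) follows with $M=X^{1/2}BX^{1/2}$. Plugging the optimal $P_i$ back in then shows (\ref{eq:pca-relax}) and (\ref{eq:primal-p}) share the value $\psi(\rho)$, and (\ref{eq:primal-p}) is visibly an SDP: linear objective $\sum_i\Tr(P_iB_i)$, linear equality $\Tr(X)=1$, and the linear matrix inequalities $X\succeq P_i\succeq 0$.

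The step I expect to be the main obstacle is the singular case of (\ref{eq:key-lemma}): when $X$ is not invertible, $X^{1/2}$ is not invertible either and the reparametrization $P=X^{1/2}QX^{1/2}$ is no longer surjective. The fix is to note that $0\preceq P\preceq X$ forces $\range(P)\subseteq\range(X)$, so one may write $X=U\Lambda U^T$ with $\Lambda\succ 0$ and $U$ having orthonormal columns, restrict $P=U\tilde P U^T$ with $0\preceq\tilde P\preceq\Lambda$, and apply the invertible case in dimension $\Rank(X)$ to $\Lambda^{1/2}U^TBU\Lambda^{1/2}$, whose nonzero eigenvalues coincide with those of $X^{1/2}BX^{1/2}$; alternatively, both sides of (\ref{eq:key-lemma}) are continuous in $X\succeq 0$ and agree on the dense set of positive definite $X$. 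Either route is routine, but the reduction to $\{0\preceq Q\preceq\idm\}$ has to be made explicit for the argument to be airtight.
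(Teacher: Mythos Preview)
Your proof is correct and follows essentially the same route as the paper: both arguments show the two objectives agree on rank-one $X=xx^T$, invoke the identity $\Tr(X^{1/2}BX^{1/2})_+=\max_{0\preceq P\preceq X}\Tr(PB)$, and drop the rank constraint. You actually justify that identity more carefully than the paper does, including the singular-$X$ case; the paper simply states it.

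The one substantive difference is that the paper records the companion dual representation
\[
\Tr(X^{1/2}BX^{1/2})_+=\min_{\{Y\succeq B,~Y\succeq 0\}}\Tr(YX),
\]
which it uses to conclude that $X\mapsto\Tr(X^{1/2}BX^{1/2})_+$ is concave (as a pointwise minimum of affine functions), so that (\ref{eq:pca-relax}) is itself a convex program and not merely equal in value to the SDP (\ref{eq:primal-p}). Your argument establishes everything the proposition literally asserts, but you never address concavity; the paper's extra line buys the interpretation of (\ref{eq:pca-relax}) as a bona fide convex relaxation, and the dual form is reused later when deriving the dual SDP (\ref{eq:dual-y}).
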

\begin{proof}We let $X^{1/2}$ denote the positive square root (i.e. with nonnegative eigenvalues) of a symmetric positive semi-definite matrix $X$. In particular, if $X=xx^T$ with $\|x\|=1$, then $X^{1/2}= X = xx^T$, and for all  $\beta\in \reals$, $\beta xx^T$ has one eigenvalue equal to $\beta$ and $n-1$ equal to 0, which implies $\Tr(\beta xx^T)_+=\beta_+$. We thus get:  
\BEAS
 (a_i^TXa_i-\rho)_+ & = & \Tr ( (a_i^T xx^T a_i - \rho) xx^T )_+
 \\
& = &  \Tr(x ( x^T a_ia_i^T x -\rho ) x^T )_+ \\
& = &  \Tr( X^{1/2} a_i a_i^T X^{1/2} - \rho X  )_+
= \Tr( X^{1/2} (  a_i a_i^T -\rho \idm)  X^{1/2}    )_+.
\EEAS
For any symmetric matrix $B$, the function $X \mapsto \Tr( X^{1/2} B X^{1/2} )_+$ is concave on the set of symmetric positive semidefinite matrices, because we can write it as:
\[\BA{ll}
\Tr(X^{1/2}BX^{1/2})_+&=\dsp\max_{\{0\preceq P \preceq X\}} \Tr(PB)\\
&=\dsp\min_{\{Y\succeq B,~Y\succeq 0\}} \Tr(YX),
\EA\]
where this last expression is a concave function of $X$ as a pointwise minimum of affine functions. We can now relax the original problem into a convex optimization problem by simply dropping the rank constraint, to get:
$$ \BA{lll}
\psi(\rho) \equiv &\mbox{max.}&  \sum_{i=1}^n\Tr(X^{1/2}a_ia_i^TX^{1/2}-\rho X)_+ \\
&\mbox{s.t.} & \Tr(X)=1,~X\succeq 0,
\EA
$$
which is a convex program in $X\in\symm_n$. Note that because $B_i$   has at most one nonnegative eigenvalue, we can replace $\Tr(X^{1/2}a_ia_i^TX^{1/2}-\rho X)_+$ by $\lambdamax(X^{1/2}a_ia_i^TX^{1/2}-\rho X)_+$ in the above program. Using the representation of $\Tr(X^{1/2}BX^{1/2})_+$ detailed above, problem (\ref{eq:pca-relax}) can be written as a semidefinite program:
\[
\BA{lll}
\psi(\rho) = &\mbox{max.}&  \sum_{i=1}^n\Tr(P_iB_i) \\
&\mbox{s.t.} & \Tr(X)=1,~X\succeq 0,~X\succeq P_i \succeq 0,
\EA\]
in the variables $X\in\symm_n,~P_i\in\symm_n$, which is the desired result. 
\end{proof}

Note that we always have $\psi(\rho)\geq\phi(\rho)$ and when the solution to the above semidefinite program has rank one, $\psi(\rho)=\phi(\rho)$ and the semidefinite relaxation (\ref{eq:primal-p}) is \emph{tight}. This simple fact allows us to derive sufficient global optimality conditions for the original sparse PCA problem.

\section{Optimality Conditions}
\label{sec:tightness} In this section, we derive necessary and sufficient conditions to test the optimality of solutions to the relaxations obtained in Sections~\ref{sec:greedy}, as well as sufficient condition for the tightness of the semidefinite relaxation in (\ref{eq:primal-p}).

\subsection{Dual problem and optimality conditions}
We first derive the dual problem to (\ref{eq:primal-p}) as well as the 
Karush-Kuhn-Tucker (KKT) optimality conditions:

\begin{lemma}
\label{lemma:dual}
Let $A\in\reals^{n \times n}$, $\rho \geq 0$ and denote by $a_1,\dots,a_n \in \reals^n$ the columns of $A$. The dual of problem  (\ref{eq:primal-p}): 
\[
\BA{lll}
\psi(\rho) = &\mbox{max.}&  \sum_{i=1}^n\Tr(P_iB_i) \\
&\mbox{s.t.} & \Tr(X)=1,~X\succeq 0,~X\succeq P_i \succeq 0,
\EA\]
in the variables $X\in\symm_n,~P_i\in\symm_n$, is given by:
\BEQ\label{eq:dual-y}
\BA{ll}
\mbox{min.} & \lambdamax\left(\sum_{i=1}^n Y_i\right)\\
\mbox{s.t.} & Y_i\succeq B_i,~Y_i\succeq0,\quad i=1,\ldots,n.
\EA\EEQ
in the variables $Y_i \in \symm_n$. Furthermore, the KKT optimality conditions for this pair of semidefinite programs are given by:
\BEQ
 \label{eq:kkt}
\left\{\BA{l}
\left(\sum_{i=1}^n Y_i\right)X=\lambdamax\left(\sum_{i=1}^n Y_i\right)X\\
(X-P_i)Y_i=0,~P_iB_i=P_iY_i\\
Y_i\succeq B_i,~Y_i,X,P_i \succeq 0,~ X \succeq P_i,~\Tr X = 1.
\EA\right.\EEQ
\end{lemma}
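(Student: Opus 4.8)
The plan is to derive the dual of the semidefinite program~(\ref{eq:primal-p}) by the standard Lagrangian route, then read off the KKT system. First I would form the Lagrangian of the primal maximization problem~(\ref{eq:primal-p}) in the variables $X,P_i$, attaching a multiplier $\lambda\in\reals$ to the equality constraint $\Tr(X)=1$, a multiplier $Z_i\succeq0$ to each constraint $X\succeq P_i$ (i.e.\ to $X-P_i\succeq0$), a multiplier $W_i\succeq0$ to each constraint $P_i\succeq0$, and a multiplier $V\succeq0$ to $X\succeq0$. Collecting the terms linear in $P_i$ gives the condition $B_i - Z_i + W_i = 0$, i.e.\ $Z_i = B_i + W_i \succeq B_i$ with $Z_i\succeq0$; collecting the terms linear in $X$ gives $-\lambda\idm + \sum_i Z_i + V = 0$, i.e.\ $\sum_i Z_i \preceq \lambda\idm$. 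The remaining part of the Lagrangian is just $\lambda$, so the dual is $\min\lambda$ subject to $\sum_i Z_i \preceq \lambda\idm$, $Z_i\succeq B_i$, $Z_i\succeq0$. Minimizing out $\lambda$ (its smallest feasible value is $\lambdamax(\sum_i Z_i)$) and renaming $Z_i\to Y_i$ yields exactly~(\ref{eq:dual-y}).

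Next I would justify strong duality and the attainment of the dual optimum, so that the KKT conditions are genuinely necessary and sufficient. The cleanest way is to exhibit a strictly feasible point for one of the programs (Slater's condition): for the dual~(\ref{eq:dual-y}), taking $Y_i = B_i + t\idm$ for $t$ large enough makes $Y_i \succ B_i$ and $Y_i\succ0$ simultaneously, so the dual is strictly feasible; combined with primal feasibility (e.g.\ $X = \ones\ones^T/n$, $P_i$ a small multiple of the projector onto the top eigenvector of $B_i$ intersected appropriately, or simply $P_i=0$), this gives zero duality gap and a dual attained. Then the KKT conditions are: primal feasibility ($\Tr X=1$, $X\succeq0$, $X\succeq P_i\succeq0$), dual feasibility ($Y_i\succeq B_i$, $Y_i\succeq0$), and complementary slackness for each inequality pair. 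Complementarity between $X\succeq P_i$ and its multiplier $Y_i$ reads $(X-P_i)Y_i=0$; complementarity between $P_i\succeq0$ and $W_i = Y_i - B_i$ reads $P_i(Y_i-B_i)=0$, i.e.\ $P_iB_i=P_iY_i$; and complementarity between $X\succeq0$ and $V=\lambda\idm-\sum_iY_i$ together with stationarity in $\lambda$ gives $\big(\sum_iY_i\big)X = \lambda X = \lambdamax\big(\sum_iY_i\big)X$. Assembling these reproduces~(\ref{eq:kkt}).

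The only mildly delicate points, rather than true obstacles, are bookkeeping ones: (i) handling the matrix inequality $X\succeq P_i$ correctly as a conic constraint with a \emph{matrix} multiplier (so that the inner product $\Tr(Y_i(X-P_i))$ is what appears), and being careful that the same $Y_i$ plays the role of the multiplier for $X-P_i\succeq0$ — this is what makes the constraint $Y_i\succeq B_i,\ Y_i\succeq0$ emerge with no extra variables; and (ii) the step of eliminating $\lambda$ and recognizing that $\inf\{\lambda:\ \lambda\idm\succeq M\}=\lambdamax(M)$. I would present the Lagrangian computation compactly, state Slater, and then simply list the complementary-slackness identities, since each is routine once the Lagrangian is in hand.
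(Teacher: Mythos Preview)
Your proposal is correct and follows essentially the same Lagrangian-duality route as the paper; the only cosmetic difference is that the paper keeps $\Tr X=1$, $X\succeq0$, $P_i\succeq0$ as domain constraints and introduces only the multipliers $Y_i$ for $X\succeq P_i$, whereas you dualize everything explicitly and then eliminate $\lambda$, $V$, $W_i$. Your added discussion of Slater's condition and the identification $\inf\{\lambda:\lambda\idm\succeq M\}=\lambdamax(M)$ are helpful expansions of steps the paper leaves implicit.
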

\begin{proof}
Starting from:
\[ \BA{ll}
\mbox{max.}&  \sum_{i=1}^n \Tr(P_iB_i) \\
\mbox{s.t.} & 0\preceq P_i \preceq X\\
& \Tr(X)=1,~X\succeq 0,
\EA\]
we can form the Lagrangian as:
\[
L(X,P_i,Y_i)=\sum_{i=1}^n \Tr(P_iB_i)+\Tr(Y_i(X-P_i))
\]
in the variables $X,P_i,Y_i\in\symm_n$, with $X,P_i,Y_i \succeq 0$ and $\Tr(X)=1$.
Maximizing $L(X,P_i,Y_i)$ in the primal variables $X$ and $P_i$ leads to problem (\ref{eq:dual-y}). The KKT conditions for this primal-dual pair of SDP can be derived from \citet[p.267]{Boyd03}.

\end{proof}

\subsection{Optimality conditions for rank one solutions}
\label{sec:semidef-optim} We now derive the KKT conditions for problem (\ref{eq:primal-p}) for the particular case where we are given a rank one candidate solution $X=xx^T$ and need to test its optimality. These necessary and sufficient conditions for the optimality of $X=xx^T$ for the convex relaxation then provide sufficient conditions for \emph{global} optimality for the non-convex problem (\ref{eq:pca-card}).

\begin{lemma}\label{lem:kkt-rank-one}
Let $A\in\reals^{n \times n}$, $\rho \geq 0$ and denote by $a_1,\dots,a_n \in \reals^n$ the columns of $A$. The rank one matrix $X=xx^T$ is an optimal solution of (\ref{eq:primal-p}) 
if and only if there are matrices $Y_i\in\symm_n,~i=1,\ldots,n$ such that:
\BEQ \label{eq:kkt-rank-one} \left\{\BA{l}
\lambdamax\left(\sum_{i=1}^n Y_i\right) = \sum_{i\in I } ( (a_i^Tx)^2-\rho )\\
x^TY_ix=\left\{\BA{l}
(a_i^Tx)^2-\rho ~\mbox{if } i \in I\\
0~\mbox{if } i \in I^c
\EA\right.\\
Y_i\succeq B_i,~Y_i \succeq 0.
\EA\right.\EEQ
where $B_i=a_ia_i^T-\rho \idm,~i=1,\ldots,n$ and $I^c$ is the complement of the set $I$ defined by:
\[
\max_{i \notin I} (a_i^T x)^2 \leq \rho \leq  \min_{i \in I} (a_i^T x)^2.
\]
Furthermore,  $x$ must be a leading eigenvector of both $\sum_{i \in I} a_i a_i^T$ and $\sum_{i=1}^n Y_i$.
\end{lemma}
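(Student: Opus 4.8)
The plan is to specialize the general KKT system \eqref{eq:kkt} from Lemma~\ref{lemma:dual} to the rank-one case $X=xx^T$ with $\|x\|=1$, and show it reduces exactly to \eqref{eq:kkt-rank-one}. First I would recall that for a rank-one $X=xx^T$ the constraint $0\preceq P_i\preceq X$ forces $P_i=p_i\,xx^T$ for some scalar $p_i\in[0,1]$ (since the range of $P_i$ must lie in the range of $X$). Plugging this into the primal objective gives $\sum_i p_i\,x^TB_ix = \sum_i p_i((a_i^Tx)^2-\rho)$, which is maximized over $p_i\in[0,1]$ by taking $p_i=1$ when $(a_i^Tx)^2>\rho$ and $p_i=0$ when $(a_i^Tx)^2<\rho$ (ties being immaterial). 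This identifies the set $I$ through $\max_{i\notin I}(a_i^Tx)^2\le\rho\le\min_{i\in I}(a_i^Tx)^2$ and shows the primal objective value at this candidate equals $\sum_{i\in I}((a_i^Tx)^2-\rho)$, which is the quantity appearing on the right-hand side of the first line of \eqref{eq:kkt-rank-one}.

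Next I would translate the dual side and complementary slackness. The dual objective is $\lambdamax(\sum_i Y_i)$ subject to $Y_i\succeq B_i$, $Y_i\succeq0$; strong duality (Slater's condition holds trivially, e.g. $X=\idm/n$, $P_i=0$ are strictly feasible in the appropriate sense) gives that $X=xx^T$ is optimal iff there exist dual-feasible $Y_i$ with matching objective values, i.e. $\lambdamax(\sum_i Y_i)=\sum_{i\in I}((a_i^Tx)^2-\rho)$, which is the first condition. Then I would extract from the KKT stationarity and complementary slackness conditions in \eqref{eq:kkt} the scalar consequences: the condition $(X-P_i)Y_i=0$ together with $X=xx^T$, $P_i=p_ixx^T$ yields $(1-p_i)xx^TY_i=0$; for $i\in I$ this is automatic ($p_i=1$), while for $i\in I^c$ it forces $x^TY_ix=0$. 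Pairing with $P_iB_i=P_iY_i$ (for $i\in I$, $xx^TB_ix = xx^TY_ix$, giving $x^TY_ix=x^TB_ix=(a_i^Tx)^2-\rho$) produces exactly the middle block of \eqref{eq:kkt-rank-one}. The eigenvector condition $(\sum_iY_i)X=\lambdamax(\sum_iY_i)X$ with $X=xx^T$ says $x$ is a leading eigenvector of $\sum_iY_i$.

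For the final claim that $x$ must also be a leading eigenvector of $\sum_{i\in I}a_ia_i^T$, I would argue by a sandwiching/duality-gap computation: since $Y_i\succeq B_i$ we have $\sum_i Y_i \succeq \sum_i B_i$, but more usefully $\sum_{i\in I}Y_i\succeq\sum_{i\in I}B_i=\sum_{i\in I}a_ia_i^T-\rho|I|\idm$ and $Y_i\succeq0$ for $i\in I^c$, so $\sum_iY_i\succeq\sum_{i\in I}a_ia_i^T-\rho|I|\idm$. Taking the quadratic form at $x$: $x^T(\sum_iY_i)x\ge x^T(\sum_{i\in I}a_ia_i^T)x-\rho|I|$. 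But the left side equals $\lambdamax(\sum_iY_i)=\sum_{i\in I}((a_i^Tx)^2-\rho) = x^T(\sum_{i\in I}a_ia_i^T)x-\rho|I|$ by the conditions already established, so equality holds throughout; in particular $x$ achieves the maximum of the Rayleigh quotient of $\sum_{i\in I}a_ia_i^T - \rho|I|\idm$ (equivalently of $\sum_{i\in I}a_ia_i^T$) over unit vectors, wait—I should be careful here: equality in $x^T(\sum_iY_i)x \ge \lambdamax(\text{something})$ does not immediately give that $x$ is the top eigenvector of that something. The cleaner route is: $\lambdamax(\sum_iY_i) = x^T(\sum_iY_i)x$ (since $x$ is a top eigenvector of $\sum_iY_i$), and $\sum_iY_i\succeq\sum_{i\in I}a_ia_i^T-\rho|I|\idm$ implies $\lambdamax(\sum_iY_i)\ge\lambdamax(\sum_{i\in I}a_ia_i^T)-\rho|I|$; combined with $\lambdamax(\sum_iY_i)=\sum_{i\in I}((a_i^Tx)^2-\rho)\le\lambdamax(\sum_{i\in I}a_ia_i^T)-\rho|I|$ (the latter because $\sum_{i\in I}(a_i^Tx)^2=x^T(\sum_{i\in I}a_ia_i^T)x\le\lambdamax(\sum_{i\in I}a_ia_i^T)$) we get $x^T(\sum_{i\in I}a_ia_i^T)x=\lambdamax(\sum_{i\in I}a_ia_i^T)$, so $x$ is a leading eigenvector of $\sum_{i\in I}a_ia_i^T$. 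I expect the main obstacle to be precisely this last step — getting the chain of inequalities to pinch correctly and justifying each direction — together with being careful about the degenerate/tie cases in the definition of $I$ (when some $(a_i^Tx)^2=\rho$ exactly) and checking that Slater's condition genuinely applies so that the KKT conditions of Lemma~\ref{lemma:dual} are both necessary and sufficient.
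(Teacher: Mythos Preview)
Your proof is correct and follows essentially the same approach as the paper: specialize the KKT conditions of Lemma~\ref{lemma:dual} to $X=xx^T$, observe that $0\preceq P_i\preceq xx^T$ forces $P_i=\alpha_i\,xx^T$ with $\alpha_i\in[0,1]$, and read off the scalar conditions from complementary slackness. For the final claim that $x$ is a leading eigenvector of $\sum_{i\in I}a_ia_i^T$, the paper simply invokes the combinatorial reformulation from \mysec{spca}, whereas your sandwich argument via $\sum_iY_i\succeq\sum_{i\in I}a_ia_i^T-\rho|I|\idm$ (combined with $\lambdamax(\sum_iY_i)=x^T(\sum_{i\in I}a_ia_i^T)x-\rho|I|$) is a valid and slightly more self-contained alternative that reaches the same conclusion.
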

\begin{proof}
We apply Lemma~\ref{lemma:dual} given $X=xx^T$. The condition $0 \preceq P_i \preceq xx^T$ is equivalent to $P_i = \alpha_i xx^T$ and $\alpha_i \in [0,1]$. The equation $P_iB_i=XY_i$ is then equivalent to $\alpha_i ( x^T B_i x - x^T Y_i x)=0$, with $x^T B_i x=(a_i^Tx)^2-\rho$ and the condition $(X-P_i)Y_i=0$ becomes $x^T Y_i x ( 1- \alpha_i)=0$. This means that $x^T Y_i x=((a_i^Tx)^2-\rho)_+$ and the first-order condition in (\ref{eq:kkt}) becomes $\lambdamax\left(\sum_{i=1}^n Y_i\right)=x^T\left(\sum_{i=1}^n Y_i\right)x$. Finally, we recall from \mysec{spca} that: 
\[\BA{ll}
\sum_{i\in I } ( (a_i^Tx)^2-\rho )&=\dsp \max_{ \| x \| = 1}~\max_{u \in \{0,1\}^n}  \sum_{i=1}^n u_i( (a_i^T x)^2 - \rho )\\
&=\dsp \max_{u \in \{0,1\}^n } \lambdamax(A\diag(u)A^T) - \rho \ones^Tu
\EA\]
hence $x$ must also be a leading eigenvector of $\sum_{i \in I} a_i a_i^T$.
\end{proof}

The previous lemma shows that given a candidate vector $x$, we can test the optimality
of $X=xx^T$ for the semidefinite program (\ref{eq:pca-relax}) by solving a semidefinite feasibility problem in the variables $Y_i \in \symm_n$. If this (rank one) solution $xx^T$ is indeed optimal for the semidefinite relaxation, then $x$ must also be \emph{globally} optimal for the original nonconvex combinatorial problem in (\ref{eq:pca-card}), so the above lemma provides sufficient global optimality conditions for the combinatorial problem (\ref{eq:pca-card}) based on the (necessary and sufficient) optimality conditions for the convex relaxation (\ref{eq:pca-relax}) given in lemma \ref{lemma:dual}. In practice, we are only given a sparsity pattern $I$ (using the results of \mysec{greedy} for example) rather than the vector $x$, but Lemma \ref{lem:kkt-rank-one} also shows that given $I$, we can get the vector $x$ as the leading eigenvector of $\sum_{i \in I} a_i a_i^T$.

The next result provides more refined conditions under which such a pair $(I,x)$ is optimal for some value of the penalty $\rho>0$ based on a local optimality argument. In particular, they allow us to fully specify the dual variables $Y_i$ for $i \in I$.

\begin{proposition}\label{prop:opt-cond}
Let $A\in\reals^{n \times n}$, $\rho \geq 0$ and denote by $a_1,\dots,a_n \in \reals^n$ the columns of $A$. Let $x$ be the largest eigenvector of $\sum_{i \in I} a_i a_i^T$. 
Let $I$ be such that:  \BEQ
\label{eq:rho-interval}
\max_{i \notin I} (a_i^T x)^2 < \rho <  \min_{i \in I} (a_i^T x)^2,
\EEQ
the matrix $X=xx^T$ is optimal  for problem (\ref{eq:primal-p}) if and only if there are matrices $Y_i\in\symm^n$ satisfying
\BEQ \label{eq:so-condition3}
\lambdamax \left( \sum_{i \in I} \frac{ B_i x x^T B_i }{x^T B_i x} +   \sum_{i \in I^c} Y_i \right) \leq \sum_{i \in I}((a_i^T x)^2 - \rho),
\EEQ
with $Y_i\succeq B_i - \frac{ B_i xx^T B_i} { x^T B_i x } ,~Y_i\succeq0$, where $B_i=a_ia_i^T-\rho \idm,~i=1,\ldots,n$.
\end{proposition}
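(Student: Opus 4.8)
The plan is to run everything through Lemma~\ref{lem:kkt-rank-one}. Since (\ref{eq:rho-interval}) holds with strict inequalities, the set $I$ here is exactly the one prescribed in that lemma, namely $\{i:(a_i^Tx)^2>\rho\}$, and $x$ is a leading eigenvector of $\sum_{i\in I}a_ia_i^T$ by assumption; hence $X=xx^T$ is optimal for (\ref{eq:primal-p}) if and only if there exist $Y_i\in\symm^n$ with $Y_i\succeq0$, $Y_i\succeq B_i$, $x^TY_ix=(a_i^Tx)^2-\rho$ for $i\in I$ and $x^TY_ix=0$ for $i\in I^c$, and $\lambdamax(\sum_{i=1}^nY_i)=\sum_{i\in I}((a_i^Tx)^2-\rho)=:\mu$. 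Proposition~\ref{prop:opt-cond} is then just this feasibility system after the variables $Y_i$, $i\in I$, have been eliminated, so the proof amounts to (i) identifying the only values these can take and (ii) translating the residual constraints on the $Y_i$, $i\in I^c$, into the single condition (\ref{eq:so-condition3}).

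I would first establish three elementary facts. (a) For $i\in I$, $x^TB_ix=(a_i^Tx)^2-\rho>0$, and the constraints $Y_i\succeq B_i$ together with $x^T(Y_i-B_i)x=0$ force $(Y_i-B_i)x=0$, i.e. $Y_ix=B_ix$; among positive semidefinite matrices $Y$ with $Yx=B_ix$ the Loewner-smallest is $B_ixx^TB_i/(x^TB_ix)$, and a short computation in the plane $\Span\{x,a_i\}$, using the special form $B_i=a_ia_i^T-\rho\idm$, shows this matrix is moreover $\succeq B_i$; hence it is the unique minimal feasible choice of $Y_i$. (b) With $S_I:=\sum_{i\in I}B_ixx^TB_i/(x^TB_ix)$ we have $S_Ix=\sum_{i\in I}B_ix=(\sum_{i\in I}a_ia_i^T)x-\rho|I|x=\mu x$, because $x$ is the leading eigenvector of $\sum_{i\in I}a_ia_i^T$; so $x$ is an eigenvector of $S_I$ with eigenvalue $\mu$, and $S_I$ splits along $\reals x\oplus x^\perp$. (c) For $i\in I^c$, $x^TB_ix<0$; a feasible $Y_i$ has $x^TY_ix=0$, hence $Y_ix=0$, and for matrices killing $x$, completing the square in the $x$-direction (a Schur-complement step legitimate because $x^TB_ix<0$) shows $Y_i\succeq B_i$ is equivalent to $Y_i\succeq\hat B_i$, where $\hat B_i:=B_i-B_ixx^TB_i/(x^TB_ix)$ also annihilates $x$.

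With these in hand, the two implications are short. For necessity, take the $Y_i$ supplied by Lemma~\ref{lem:kkt-rank-one}; by (a), $\sum_{i\in I}Y_i\succeq S_I$, so $\lambdamax(S_I+\sum_{i\in I^c}Y_i)\leq\lambdamax(\sum_{i=1}^nY_i)=\mu$, which is (\ref{eq:so-condition3}), while by (c) the $Y_i$, $i\in I^c$, satisfy $Y_i\succeq0$ and $Y_i\succeq\hat B_i$. For sufficiency, given $Y_i$, $i\in I^c$, with (\ref{eq:so-condition3}), $Y_i\succeq0$, $Y_i\succeq\hat B_i$, replace each by its compression $\bar Y_i:=(\idm-xx^T)Y_i(\idm-xx^T)$: this keeps $\bar Y_i\succeq0$, gives $\bar Y_ix=0$, preserves $\bar Y_i\succeq\hat B_i$ (as $\hat B_ix=0$) hence $\bar Y_i\succeq B_i$ by (c), and cannot increase the top eigenvalue of the sum, so $\lambdamax(S_I+\sum_{i\in I^c}\bar Y_i)\leq\mu$; since $x$ remains an eigenvector of this matrix with eigenvalue $\mu$ by (b), its top eigenvalue is exactly $\mu$ and $x$ is a leading eigenvector. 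Setting $Y_i:=B_ixx^TB_i/(x^TB_ix)$ for $i\in I$ and $Y_i:=\bar Y_i$ for $i\in I^c$ then satisfies every condition of Lemma~\ref{lem:kkt-rank-one}, so $X=xx^T$ is optimal.

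The two genuinely computational points are the planar positivity check $B_ixx^TB_i/(x^TB_ix)\succeq B_i$ in (a) — this is precisely where the structure $B_i=a_ia_i^T-\rho\idm$ (with $\rho>0$ and $(a_i^Tx)^2>\rho$) is used — and the Schur-complement equivalence in (c), which needs $(a_i^Tx)^2<\rho$ strictly; both are routine. I expect the only real obstacle to be bookkeeping rather than insight: the proposition has traded the exact KKT equation $x^TY_ix=0$ for $i\in I^c$ for the weaker pair $Y_i\succeq0$, $Y_i\succeq\hat B_i$, so one must verify, as above, that compressing onto $x^\perp$ recovers $Y_ix=0$ without breaking either the semidefinite constraints or the eigenvalue bound (\ref{eq:so-condition3}), which is exactly where the observation that compression never raises $\lambdamax$ and the block structure from (b) come in.
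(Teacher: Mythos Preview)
Your proof is correct. For sufficiency you and the paper both verify the hypotheses of Lemma~\ref{lem:kkt-rank-one}; the paper's argument is marginally more direct, observing that
\[
\mu+\sum_{i\in I^c}x^TY_ix \;=\; x^T\Bigl(S_I+\sum_{i\in I^c}Y_i\Bigr)x \;\leq\; \lambdamax\Bigl(S_I+\sum_{i\in I^c}Y_i\Bigr)\;\leq\;\mu,
\]
so each $x^TY_ix$ vanishes automatically and no compression onto $x^\perp$ is required --- your projection step is harmless but unnecessary.

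For necessity, however, your route differs genuinely from the paper's. The paper argues locally: it expands $X\mapsto\sum_i\Tr(X^{1/2}B_iX^{1/2})_+$ to first order along segments $t\mapsto(1-t)xx^T+tY$ via eigenvalue perturbation theory (Lemma~\ref{lem:expansion}, itself built on the Cauchy--residue machinery of Appendix~\ref{app:perturbations}), and then dualizes the resulting directional-derivative inequality to obtain (\ref{eq:so-condition3}). You instead stay entirely inside the KKT framework: starting from the dual variables $Y_i$ supplied by Lemma~\ref{lem:kkt-rank-one}, you note that for $i\in I$ the constraints $Y_i\succeq0$, $Y_i\succeq B_i$, $x^TY_ix=x^TB_ix$ force $Y_ix=B_ix$, and a Cauchy--Schwarz argument in the $Y_i$-inner product shows $B_ixx^TB_i/(x^TB_ix)$ is the Loewner-minimal matrix with this property; monotonicity of $\lambdamax$ then yields (\ref{eq:so-condition3}) directly. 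Your argument is more elementary --- it dispenses with the appendix altogether --- while the paper's makes explicit that (\ref{eq:so-condition3}) is precisely the first-order local optimality condition for the concave relaxation~(\ref{eq:pca-relax}), which is the interpretive point the authors stress in the discussion following the proposition.
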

\begin{proof}
We first prove the necessary condition by computing a first order expansion of the functions $F_i: X \mapsto \Tr (X^{1/2} B_i X^{1/2})_+$ around $X=xx^T$. The expansion is based on the results in Appendix~\ref{app:perturbations} which show how to compute derivatives of eigenvalues and projections on eigensubspaces. More precisely, Lemma~\ref{lem:expansion} states that if $x^T B x >0$, then, for any $Y \succeq 0$: 
$$
F_i((1-t) xx^T + t Y ) = F_i(xx^T) + \frac{t}{x^T B_i x} \Tr B_i xx^T B_i ( Y - xx^T) + O(t^{3/2}),
$$
while if $x^T B x < 0 $, then, for any $Y \succeq 0$,:
$$
F_i((1-t) xx^T + t Y ) = t_+
 \Tr \left(Y^{1/2} \left( B_i - \frac{ B_i x  x^T B_i}{ x^T B_i x} \right) Y^{1/2} \right)_+ 
+  O(t^{3/2}).
$$
Thus if $X=xx^T$ is a global maximum of $\sum_i F_i(X)$, then this first order expansion must reflect the fact that it is also local maximum, i.e. for all $Y \in \symm^n$ such that $Y \succeq 0$ and $\Tr Y = 1$, we must have:
\[
\lim_{t \to 0_+} \frac{1}{t}  \sum_{i=1}^n [ F_i((1-t) xx^T +tY )-F_i(xx^T) ] \leq 0,
\] 
which is equivalent to:
$$ -\sum_{i \in I} x^T B_i x +
\Tr Y \left(\sum_{i \in I}
\frac{B_i xx^T B_i}{x^T B_i x} \right) + \sum_{i \in I^c}
\Tr \left(Y^{1/2} \left( B_i - \frac{ B_i x  x^T B_i}{ x^T B_i x}  \right) Y^{1/2} \right)_+  \leq 0.
$$
Thus if $X=xx^T$ is optimal, with $\sigma = \sum_{i \in I} x^T B_i x$, we get:
\[ 
\displaystyle \max_{Y \succeq 0, \Tr Y=1} \Tr Y \left(
\sum_{i \in I} \frac{ B_i x x^T B_i }{x^T B_i x} - \sigma \idm  \right) + \sum_{i \in I^c} \Tr\left ( Y^{1/2} \left( B_i - B_i x ( x^T B_i x )^\dagger x^T B_i\right)Y^{1/2} \right)_+ \leq 0
\]
which is also in dual form (using the same techniques as in the proof of Proposition~\ref{prop:relax}):
\[
\displaystyle 
\min_{\{Y_i \succeq B_i  -  \frac{   B_i xx^T B_i   }{x^T B_i x}  , Y_i \succeq 0\}} \lambdamax
\left( \sum_{i \in I} \frac{ B_i x x^T B_i }{x^T B_i x} +   \sum_{i \in I^c} Y_i \right) \leq \sigma  ,
\]
which leads to the necessary condition. In order to prove sufficiency, the only non trivial condition to check in Lemma~\ref{lem:kkt-rank-one} is that $x^T Y_i x = 0$ for $i \in I^c$, which is a consequence of the inequality:
\[
x^T \left( \sum_{i \in I} \frac{ B_i x x^T B_i }{x^T B_i x} +   \sum_{i \in I^c} Y_i \right) x 
\leq \lambdamax
 \left( \sum_{i \in I} \frac{ B_i x x^T B_i }{x^T B_i x} +   \sum_{i \in I^c} Y_i \right)
 \leq x^T \left( \sum_{i \in I} \frac{ B_i x x^T B_i }{x^T B_i x}  \right) x.
\]
This concludes the proof.
\end{proof}
The original optimality conditions in (\ref{lem:kkt-rank-one}) are highly degenerate in $Y_i$ and this result refines these optimality conditions by invoking the local structure. The local optimality analysis in proposition \ref{prop:opt-cond} gives more specific constraints on the dual variables $Y_i$. For $i \in I$, $Y_i$ must be equal to ${ B_i x x^T B_i }/{x^T B_i x} $, while if $i \in I^c$, we must have $Y_i \succeq B_i  -  {   B_i xx^T B_i   }/{x^T B_i x}$, which is a stricter condition than $Y_i \succeq B_i $ (because $x^T B_i x < 0$).

\subsection{Efficient Optimality Conditions }
\label{sec:eff-opt} The condition presented in Proposition~\ref{prop:opt-cond} still requires solving a large semidefinite program. In practice, good candidates for $Y_i,~i\in I^c$ can be found by solving for minimum trace matrices satisfying the feasibility conditions of proposition \ref{prop:opt-cond}. As we will see below, this can be formulated as a semidefinite program which can be solved explicitly.
\begin{lemma} \label{lem:eff-opt}
Let $A\in\reals^{n \times n}$, $\rho \geq 0$, $x\in\reals^n$ and $B_i=a_ia_i^T - \rho\idm$ with $a_1,\dots,a_n \in \reals^n$ the columns of $A$. If $(a_i^T x)^2 < \rho$ and $\|x\|=1$, an optimal solution of the semidefinite program:
\[\BA{ll}
\mbox{minimize} & \Tr Y_i\\
\mbox{subject to} &Y_i\succeq B_i  -  \frac{   B_i xx^T B_i   }{x^T B_i x} ,~ x^TY_ix=0,~Y_i\succeq 0,
\EA\]
is given by:
\BEQ\label{eq:yic}
Y_i=\max\left\{0,\rho\frac{(a_i^T a_i-\rho)}{(\rho-(a_i^T x)^2)}\right\}\frac{(\idm-xx^T)a_ia_i^T(\idm-xx^T)}{\|(\idm-xx^T)a_i\|^2}.
\EEQ
\end{lemma}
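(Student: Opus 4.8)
The plan is to recognize the feasibility program as a request for the minimum-trace positive semidefinite matrix dominating a fixed symmetric matrix, and then to diagonalize that matrix explicitly. Write $\beta = a_i^T x$, so that $x^T B_i x = \beta^2 - \rho < 0$ under the standing assumption $(a_i^T x)^2 < \rho$, and set
\[
C_i = B_i - \frac{B_i x x^T B_i}{x^T B_i x}.
\]
Since $C_i x = B_i x - B_i x\,(x^T B_i x)/(x^T B_i x) = 0$, the vector $x$ is an eigenvector of $C_i$ with eigenvalue $0$; moreover, for $Y_i \succeq 0$ the constraint $x^T Y_i x = 0$ is equivalent to $Y_i x = 0$ (write $Y_i = R^T R$, so $\|Rx\| = 0$). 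Hence the program reads: minimize $\Tr Y_i$ over $Y_i \succeq 0$, $Y_i x = 0$, $Y_i \succeq C_i$.

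First I would show the optimum is the positive part $(C_i)_+$ of $C_i$, i.e. the matrix obtained from the spectral decomposition of $C_i$ by discarding the nonpositive eigenvalues. Dropping the constraint $Y_i x = 0$ can only decrease the optimal value; but any $Y_i$ with $Y_i \succeq 0$ and $Y_i \succeq C_i$ satisfies $u_j^T Y_i u_j \geq \max\{0,\lambda_j\}$ for each eigenpair $(\lambda_j,u_j)$ of $C_i$, hence $\Tr Y_i \geq \sum_{\lambda_j>0}\lambda_j = \Tr (C_i)_+$. The matrix $(C_i)_+$ attains this bound, is positive semidefinite, dominates $C_i$ since $(C_i)_+ - C_i = (C_i)_- \succeq 0$, and annihilates $x$ because $x$ lies in the null space of $C_i$; so it is feasible for the original (tighter) program and therefore optimal there as well.

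Next I would compute $(C_i)_+$ in closed form by pinning down the at most one positive eigenvalue of $C_i$. Because $B_i = a_i a_i^T - \rho\idm$ and $B_i x = \beta a_i - \rho x$, the matrix $C_i + \rho\idm = a_i a_i^T - B_i x x^T B_i/(x^T B_i x)$ has rank at most two with range inside $V := \Span\{a_i,x\}$; consequently $C_i = -\rho\idm$ on $V^\perp$ (negative semidefinite there), while $C_i$ preserves $V$. On the (generically) two-dimensional space $V$, $C_i$ is symmetric and kills $x$, so its only possibly-nonzero eigenvalue lies along the unit vector $w = (\idm - xx^T)a_i/\|(\idm - xx^T)a_i\|$ spanning $V \cap x^\perp$ (here $\|x\|=1$ is used). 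If $a_i$ is parallel to $x$ then $\|a_i\|^2 = (a_i^T x)^2 < \rho$, a case in which one checks directly that $C_i = -\rho(\idm - xx^T) \preceq 0$ and both the optimum and the right-hand side of (\ref{eq:yic}) are $0$. Hence $(C_i)_+ = \max\{0,\, w^T C_i w\}\, w w^T$, and $w w^T = (\idm - xx^T)a_i a_i^T(\idm - xx^T)/\|(\idm - xx^T)a_i\|^2$ is exactly the rank-one factor appearing in (\ref{eq:yic}).

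It then remains to evaluate $w^T C_i w = w^T B_i w - (w^T B_i x)^2/(x^T B_i x)$. Using $a_i^T w = \|(\idm - xx^T)a_i\| = (\|a_i\|^2 - \beta^2)^{1/2}$ and $w^T x = 0$, one obtains $w^T B_i w = \|a_i\|^2 - \beta^2 - \rho$ and $w^T B_i x = \beta(\|a_i\|^2 - \beta^2)^{1/2}$, so that after clearing the common denominator $\beta^2 - \rho$,
\[
w^T C_i w = (\|a_i\|^2 - \beta^2 - \rho) - \frac{\beta^2(\|a_i\|^2 - \beta^2)}{\beta^2 - \rho} = \frac{\rho(\|a_i\|^2 - \rho)}{\rho - \beta^2}.
\]
Substituting $\beta = a_i^T x$ and $\|a_i\|^2 = a_i^T a_i$ turns $\max\{0, w^T C_i w\}$ into exactly the scalar in (\ref{eq:yic}). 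The genuinely delicate step is the first one — verifying that $(C_i)_+$ is simultaneously trace-minimal among matrices $\succeq 0$ dominating $C_i$, dominates $C_i$, and annihilates $x$; the computation of the spectral structure of $C_i$ is the main bookkeeping, and the final scalar identity is routine algebra.
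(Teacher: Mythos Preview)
Your proof is correct and takes a genuinely different route from the paper. The paper verifies feasibility of the candidate $Y_i$ directly and then certifies optimality through the KKT conditions of the SDP: it writes down the dual problem, proposes $P_i = Y_i\,\Tr Y_i/\Tr Y_i^2$ with $\nu$ large as a dual feasible point, and checks complementary slackness. Your argument instead recognizes that, once the constraint $x^TY_ix=0$ is seen to be implied by $Y_ix=0$ (which holds automatically for $(C_i)_+$ since $C_ix=0$), the problem reduces to the well-known fact that the positive part $(C_i)_+$ is the minimum-trace PSD majorant of $C_i$; you then diagonalize $C_i$ explicitly on $\Span\{x,a_i\}$ and its complement. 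Your approach is more elementary and self-contained---it avoids duality entirely and makes transparent why the answer is a rank-one projector---while the paper's KKT argument fits more naturally with the semidefinite machinery developed in the surrounding sections and generalizes more readily if the constraint matrix were not so structured. Both routes handle the degenerate case $a_i^Ta_i\le\rho$ (where the optimum is $0$) by observing that $C_i\preceq 0$ there.
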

\begin{proof} Let us write $M_i=B_i - \frac{B_i xx^T B_i}{x^T B_i x}$, we first compute:
\BEAS
a_i^T M_i a_i
& = & ( a_i^T a_i - \rho) a_i^T a_i - \frac{ ( a_i^Ta_i a_i^T x - \rho a_i^Tx)^2}{(a_i^Tx)^2-\rho} \\
& = &  \frac{  ( a_i^T a_i - \rho) }{\rho-(a_i^T x)^2} \rho ( a_i^T a_i -  (a_i^Tx)^2 ).
\EEAS
When $a_i^T a_i \leq \rho$, the matrix $M_i$ is negative semidefinite, because $\|x\|=1$ means $a_i^TMa_i \leq0$ and $x^TMx=a_i^TMx=0$. The solution of the minimum trace problem is then simply $Y_i=0$. We now assume that $a_i^T a_i > \rho$ and first check feasibility of the candidate solution $Y_i$ in (\ref{eq:yic}). 
By construction, we have $Y_i\succeq 0$ and $Y_i x = 0$, and a short calculation shows that:
\BEAS
a_i^T Y_i a_i  & = & \rho\frac{(a_i^T a_i-\rho)}{(\rho-(a_i^T x)^2)} ( a_i^T a_i - ( a_i^Tx)^2 )\\
& = &  a_i^T M_i a_i.
\EEAS
We only need to check that $Y_i\succeq M_i$ on the subspace spanned by $a_i$ and $x$, for which there is equality. This means that $Y_i$ in (\ref{eq:yic}) is feasible and we now check its optimality. The dual of the original semidefinite program can be written:
\[\BA{ll}
\mbox{maximize} & \Tr P_i M_i \\
\mbox{subject to} & \idm - P_i + \nu xx^T \succeq 0\\
& P_i \succeq 0,
\EA\]
and the KKT optimality conditions for this problem are written:
\[ \left\{\BA{l}
Y_i(\idm-P_i+\nu xx^T)=0,~P_i(Y_i- M_i )=0,\\
\idm-P_i+\nu xx^T \succeq 0,\\ 
P_i \succeq 0,~ Y_i \succeq 0,~Y_i\succeq M_i, ~ Y_i xx^T=0,\quad i\in I^c.
\EA\right.\]
Setting $P_i = Y_i { \Tr Y_i }/{\Tr Y_i^2}$ and $\nu$ sufficiently large makes these variables dual feasible. Because all contributions of $x$ are zero, $\Tr Y_i (Y_i- M_i) $ is proportional to $\Tr a_i a_i^T (Y_i- M_i)$ which is equal to zero and $Y_i$ in (\ref{eq:yic}) satisifies the KKT optimality conditions.
\end{proof}

We summarize the results of this section in the theorem below, which provides sufficient optimality conditions on a sparsity pattern $I$.

\begin{theorem} \label{th:opt}
Let $A\in\reals^{n \times n}$, $\rho \geq 0$, $\Sigma=A^TA$ with $a_1,\dots,a_n \in \reals^n$ the columns of $A$. Given a sparsity pattern $I$, setting $x$ to be the largest eigenvector of $\sum_{i \in I} a_i a_i^T$, if there is a $\rho^*\geq 0$ such that the following conditions hold:
\[
\max_{i \in I^c} (a_i^T x)^2 < \rho^* < \min_{i \in I} (a_i^T x)^2 \quad \mbox{and}\quad
\lambdamax\left(\sum_{i=1}^n Y_i\right) \leq \sum_{i \in I}((a_i^T x)^2 - \rho^*),
\] 
with the dual variables $Y_i$ for $i\in I^c$ defined as in (\ref{eq:yic}) and:
\[
Y_i = \frac{ B_i x x^T B_i }{x^T B_i x},\quad{when }~ i \in I,
\]
then the sparsity pattern $I$ is globally optimal for the sparse PCA problem~(\ref{eq:pca-card}) with $\rho=\rho^*$ and we can form an optimal solution $z$ by solving the maximum eigenvalue problem:
\[
z=\argmax_{ \{z_{I^c}=0,~\|z\|=1 \}} z^T \Sigma z.\\
\]
\end{theorem}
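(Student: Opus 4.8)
The plan is to assemble Theorem~\ref{th:opt} directly from the chain of results already established, treating it essentially as a corollary that packages Proposition~\ref{prop:opt-cond} and Lemma~\ref{lem:eff-opt} into a single self-contained, checkable criterion. First I would observe that given the sparsity pattern $I$, the vector $x$ defined as the leading eigenvector of $\sum_{i\in I}a_ia_i^T$ together with the hypothesis $\max_{i\in I^c}(a_i^Tx)^2 < \rho^* < \min_{i\in I}(a_i^Tx)^2$ is precisely the setup of Proposition~\ref{prop:opt-cond} with $\rho=\rho^*$; in particular $x^TB_ix = (a_i^Tx)^2-\rho^* > 0$ for $i\in I$ and $< 0$ for $i\in I^c$, so the quantities $B_ixx^TB_i/(x^TB_ix)$ and the feasibility constraint $Y_i\succeq B_i - B_ixx^TB_i/(x^TB_ix)$ all make sense.

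Next I would verify that the $Y_i$ chosen in the theorem statement are feasible for the dual problem of Proposition~\ref{prop:opt-cond}. For $i\in I$ we take $Y_i = B_ixx^TB_i/(x^TB_ix)$, which is the forced value identified in the remark following Proposition~\ref{prop:opt-cond}: it is positive semidefinite (a rank-one matrix scaled by the positive number $x^TB_ix$) and it trivially satisfies $Y_i \succeq B_i - B_ixx^TB_i/(x^TB_ix)$ with equality. For $i\in I^c$ we take the $Y_i$ from equation~(\ref{eq:yic}); Lemma~\ref{lem:eff-opt} establishes exactly that this matrix is feasible, i.e.\ $Y_i\succeq 0$, $x^TY_ix=0$, and $Y_i\succeq B_i - B_ixx^TB_i/(x^TB_ix)$, under the hypothesis $(a_i^Tx)^2<\rho^*$ which holds for $i\in I^c$ by assumption. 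With these $Y_i$ feasible and the remaining hypothesis $\lambdamax(\sum_{i=1}^n Y_i)\le \sum_{i\in I}((a_i^Tx)^2-\rho^*)$, the sufficient condition~(\ref{eq:so-condition3}) of Proposition~\ref{prop:opt-cond} is met, so $X=xx^T$ is optimal for the semidefinite relaxation~(\ref{eq:primal-p}).

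The final step is to pass from optimality of $X=xx^T$ for the relaxation to global optimality of the sparsity pattern for the original combinatorial problem~(\ref{eq:pca-card}). Here I would invoke the remark immediately after Proposition~\ref{prop:relax}: since $\psi(\rho^*)\ge\phi(\rho^*)$ always, and the optimal solution of the relaxation is rank one, we have $\psi(\rho^*)=\phi(\rho^*)$, so the relaxation is tight. By Lemma~\ref{lem:kkt-rank-one}, a rank-one optimal $xx^T$ forces $x$ to be a leading eigenvector of $\sum_{i\in I}a_ia_i^T$ and the objective value equals $\sum_{i\in I}((a_i^Tx)^2-\rho^*)$, which matches $\phi(\rho^*)$ attained at the vector supported on $I$; hence that vector, obtained as $z=\argmax_{\{z_{I^c}=0,\|z\|=1\}}z^T\Sigma z$, is globally optimal for~(\ref{eq:pca-card}) with $\rho=\rho^*$.

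I expect the main obstacle to be purely bookkeeping rather than conceptual: one must be careful that the block of $Y_i$ for $i\in I$ plugged into~(\ref{eq:so-condition3}) is indeed the term $\sum_{i\in I}B_ixx^TB_i/(x^TB_ix)$ appearing there (not an independent free variable), and that the strict inequalities in the $\rho^*$-interval are what guarantee the sign conditions $x^TB_ix\neq 0$ needed for the expressions in Lemma~\ref{lem:eff-opt} and Proposition~\ref{prop:opt-cond} to be well defined; the degenerate boundary cases where some $(a_i^Tx)^2=\rho^*$ are excluded by hypothesis. Beyond that, the theorem is essentially a restatement assembling Lemmas~\ref{lemma:dual}, \ref{lem:kkt-rank-one}, \ref{lem:eff-opt} and Proposition~\ref{prop:opt-cond}, and no genuinely new estimate is required.
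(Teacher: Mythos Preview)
Your proposal is correct and follows essentially the same route as the paper: invoke Proposition~\ref{prop:opt-cond} with the forced choice $Y_i=B_ixx^TB_i/(x^TB_ix)$ for $i\in I$ and the explicit $Y_i$ from Lemma~\ref{lem:eff-opt} for $i\in I^c$, conclude that $X=xx^T$ is optimal for the relaxation~(\ref{eq:primal-p}), and then use rank-one tightness plus Section~\ref{sec:spca} to pass back to~(\ref{eq:pca-card}). One small slip: for $i\in I$ the statement that $Y_i\succeq B_i-B_ixx^TB_i/(x^TB_ix)$ holds ``with equality'' is not literally correct, but this is harmless since in~(\ref{eq:so-condition3}) that constraint applies only to $i\in I^c$ and the $i\in I$ block is already fixed, exactly as you note in your closing bookkeeping remark.
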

\begin{proof}
Following proposition \ref{prop:opt-cond} and lemma \ref{lem:eff-opt}, the matrices $Y_i$  are dual optimal solutions corresponding to the primal optimal solution $X=xx^T$ in (\ref{eq:pca-relax}). Because the primal solution has rank one, the semidefinite relaxation (\ref{eq:primal-p}) is tight so the pattern $I$ is optimal for~(\ref{eq:pca-card}) and \mysec{spca} shows that $z$ is a globally optimal solution to~(\ref{eq:pca-card}) with $\rho=\rho^*$.
\end{proof}

\subsection{Gap minimization: finding the optimal $\rho$}
All we need now is an efficient algorithm to find $\rho^*$ in theorem \ref{th:opt}. As we will show below, when the dual variables $Y_i^c$ are defined as in (\ref{eq:yic}), the duality gap in (\ref{eq:pca-card}) is a convex function of $\rho$ hence, given a sparsity pattern $I$, we can efficiently search for the best possible $\rho$ (which must belong to an \emph{interval}) by performing a few binary search iterations. 

\begin{lemma} \label{lemma:cvx-gap}
Let $A\in\reals^{n \times n}$, $\rho \geq 0$, $\Sigma=A^TA$ with $a_1,\dots,a_n \in \reals^n$ the columns of $A$. Given a sparsity pattern $I$, setting $x$ to be the largest eigenvector of $\sum_{i \in I} a_i a_i^T$, with the dual variables $Y_i$ for $i\in I^c$ defined as in (\ref{eq:yic}) and:
\[
Y_i = \frac{ B_i x x^T B_i }{x^T B_i x},\quad{when }~ i \in I.
\]
The duality gap in (\ref{eq:pca-card}) which is given by:
\[
\mathrm{gap}(\rho) \equiv \lambdamax\left(\sum_{i=1}^n Y_i\right)-\sum_{i \in I}((a_i^T x)^2 - \rho),
\]
is a convex function of $\rho$ when
\[
\max_{i \notin I} (a_i^T x)^2 < \rho < \min_{i \in I} (a_i^T x)^2.
\]
\end{lemma}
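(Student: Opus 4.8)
The plan is to show that the map $\rho \mapsto \mathrm{gap}(\rho)$ decomposes into two pieces, each of which is convex on the stated interval. Write $\mathrm{gap}(\rho) = \lambdamax\bigl(\sum_{i=1}^n Y_i(\rho)\bigr) - \sum_{i\in I}\bigl((a_i^Tx)^2-\rho\bigr)$. The second term, $-\sum_{i\in I}\bigl((a_i^Tx)^2-\rho\bigr) = -\bigl(\sum_{i\in I}(a_i^Tx)^2\bigr) + |I|\rho$, is affine in $\rho$, so it suffices to prove that $\rho \mapsto \lambdamax\bigl(\sum_{i=1}^n Y_i(\rho)\bigr)$ is convex on the interval $\max_{i\notin I}(a_i^Tx)^2 < \rho < \min_{i\in I}(a_i^Tx)^2$. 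Note that on this interval $x$ is fixed (it is the leading eigenvector of $\sum_{i\in I}a_ia_i^T$, which does not depend on $\rho$), $x^TB_ix = (a_i^Tx)^2 - \rho > 0$ for $i\in I$, and $x^TB_ix < 0$ for $i\in I^c$, so all the $Y_i(\rho)$ are well-defined and the expressions in Lemma~\ref{lem:eff-opt} and Proposition~\ref{prop:opt-cond} apply without case ambiguity.

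Since $\lambdamax$ is convex and nondecreasing under the positive semidefinite order on its matrix argument, and since a sum of convex matrix-valued functions composed with $\lambdamax$ need not in general be convex, the cleanest route is to show each $\rho \mapsto Y_i(\rho)$ is convex in $\rho$ as a matrix-valued function (i.e. $Y_i$ is a matrix whose every entry is a convex function of $\rho$, equivalently $Y_i\bigl(\lambda\rho_1+(1-\lambda)\rho_2\bigr) \preceq \lambda Y_i(\rho_1)+(1-\lambda)Y_i(\rho_2)$ in a suitable sense) — but actually convexity of $\lambdamax$ only needs the weaker statement that $\sum_i Y_i(\rho)$ is a convex function of $\rho$ valued in $\symm_n$, together with the fact that $\lambdamax(\lambda M_1 + (1-\lambda)M_2) \le \lambda\lambdamax(M_1)+(1-\lambda)\lambdamax(M_2)$. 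So the real task reduces to: show $\rho \mapsto \sum_{i=1}^n Y_i(\rho)$ is a convex $\symm_n$-valued function on the interval. For $i \in I^c$, from \eqref{eq:yic} the matrix $Y_i(\rho)$ is a fixed rank-one PSD matrix $\frac{(\idm-xx^T)a_ia_i^T(\idm-xx^T)}{\|(\idm-xx^T)a_i\|^2}$ multiplied by the scalar $\max\bigl\{0,\ \rho\frac{a_i^Ta_i-\rho}{\rho-(a_i^Tx)^2}\bigr\}$; for $i\in I$, $Y_i(\rho) = \frac{B_i(\rho)xx^TB_i(\rho)}{x^TB_i(\rho)x}$ with $B_i(\rho) = a_ia_i^T - \rho\idm$, so $B_i(\rho)x = (a_i^Tx)a_i - \rho x$ is affine in $\rho$ and $x^TB_i(\rho)x = (a_i^Tx)^2-\rho$ is affine and positive, making $Y_i(\rho)$ a rank-one matrix $\frac{vv^T}{s}$ with $v$ affine in $\rho$ and $s>0$ affine in $\rho$ — a matrix-fractional function, which is known to be convex (it is the Schur-complement form $\min\{Z : \begin{psmallmatrix}Z & v\\ v^T & s\end{psmallmatrix}\succeq 0\}$, a pointwise minimum of affine-in-$(\rho,Z)$ constraints, hence convex in $\rho$). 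So I would: (1) reduce to convexity of $\lambdamax\bigl(\sum_i Y_i(\rho)\bigr)$; (2) for $i\in I$, invoke matrix-fractional convexity of $\rho\mapsto \frac{(B_i(\rho)x)(B_i(\rho)x)^T}{x^TB_i(\rho)x}$; (3) for $i\in I^c$, check that the scalar coefficient $\rho\mapsto\max\{0, g_i(\rho)\}$ with $g_i(\rho)=\rho\frac{a_i^Ta_i-\rho}{\rho-(a_i^Tx)^2}$ is convex on the interval, which (multiplying a convex nonnegative scalar by a fixed PSD matrix preserves $\symm_n$-convexity, and $\max\{0,\cdot\}$ of a convex function is convex) finishes it; (4) sum and apply convexity + monotonicity of $\lambdamax$.

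I expect step (3) — verifying convexity of the scalar $\rho \mapsto \max\{0,\ \rho(a_i^Ta_i-\rho)/(\rho-(a_i^Tx)^2)\}$ on $\rho \in \bigl(\max_{i\notin I}(a_i^Tx)^2,\ \min_{i\in I}(a_i^Tx)^2\bigr)$ — to be the main obstacle, since it is the one genuinely computational point. Write $c = (a_i^Tx)^2 < \rho$ and $d = a_i^Ta_i \ge c$; then $g_i(\rho) = \rho(d-\rho)/(\rho-c)$. One differentiates twice (or rewrites $g_i(\rho) = -(\rho - c) + (d - c) - \frac{c(d-c)}{\rho-c} \cdot$, i.e. perform polynomial division: $\rho(d-\rho) = -(\rho-c)^2 + (d-c+ c - c)(\rho - c) \cdot\ldots$ — carefully, $\rho(d-\rho) = -\rho^2 + d\rho$, and dividing by $\rho - c$ gives $-\rho + (d-c) + \frac{c(d-c)}{\rho - c}$, wait sign-check needed) to express $g_i$ as (affine in $\rho$) plus (positive constant)$/(\rho-c)$, whose second derivative is $2\cdot(\text{const})/(\rho-c)^3 > 0$ when the constant has the right sign and $\rho > c$; one must confirm the constant multiplying $1/(\rho-c)$ is nonnegative, which follows from $d \ge c \ge 0$ (recall the marginal-variance ordering and $(a_i^Tx)^2 \le a_i^Ta_i$). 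Once $g_i$ is convex, $\max\{0,g_i\}$ is convex, and the argument closes. The only subtlety to flag is that the $Y_i$ for $i\in I$ must also be written in a manifestly convex form before summing — but matrix-fractional convexity handles this cleanly, so I do not anticipate trouble there beyond bookkeeping with the affine maps $\rho\mapsto B_i(\rho)x$ and $\rho\mapsto x^TB_i(\rho)x$.
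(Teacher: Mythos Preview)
Your proposal is correct and follows essentially the same route as the paper. The paper makes the argument pointwise in $u$: it writes $\lambdamax\bigl(\sum_i Y_i(\rho)\bigr)=\max_{\|u\|=1}\sum_i u^TY_i(\rho)u$, shows that for $i\in I$ the quantity $u^TY_iu=(u^Ta_ia_i^Tx-\rho u^Tx)^2/((a_i^Tx)^2-\rho)$ is convex in $\rho$ (the quadratic-over-linear form you call ``matrix-fractional''), performs for $i\in I^c$ the same partial-fraction rewrite you sketch, $\rho(d-\rho)/(\rho-c)=-\rho+(d-c)+c(d-c)/(\rho-c)$ with $c=(a_i^Tx)^2$ and $d=a_i^Ta_i$, and then concludes by the pointwise-max-of-convex argument; your framing via PSD-convexity of $\sum_i Y_i(\rho)$ together with convexity and PSD-monotonicity of $\lambdamax$ is an equivalent packaging of the same computation.
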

\begin{proof} For $i \in I$ and $u\in\reals^n$, we have
\[
u^T Y_i u = \frac{ ( u^T a_i a_i^T x - \rho u^Tx )^2}{ (a_i^T x)^2 - \rho},
\] 
which is a convex function of $\rho$ \citep[p.73]{Boyd03}. For $i \in I^c$, we can write:
\[
\frac{ \rho ( a_i^T a_i - \rho)}{ \rho - (a_i^T x)^2} = - \rho + ( a_i^T a_i - ( a_i^T x)^2) \left( 1  +\frac{ (a_i^T x)^2}{\rho - (a_i^T x)^2} \right),
\] 
hence $\max \{ 0, {\rho ( a_i^T a_i - \rho)}/{ (\rho - (a_i^T x)^2}) \}$ is also a convex function of $\rho$. This means that:
\[
u^TY_iu=\max\left\{0,\rho\frac{(a_i^T a_i-\rho)}{(\rho-(a_i^T x)^2)}\right\}\frac{(u^Ta_i -(x^Tu)(x^Ta_i))^2}{\|(\idm-xx^T)a_i\|^2}
\]
is convex in $\rho$ when $i \in I^c$. We conclude that $\sum_{i=1}^n u^TY_i u$ is convex, hence:
\[
\mathrm{gap}(\rho)=\max_{\|u\|=1} ~\sum_{i=1}^n u^TY_i u - \sum_{i \in I}((a_i^T x)^2 - \rho)
\]
is also convex in $\rho$ as a pointwise maximum of convex functions of $\rho$.
\end{proof}

This result shows that the set of $\rho$ for which the pattern $I$ is optimal must be an interval. It also suggests an efficient procedure for testing the optimality of a given pattern $I$. We first compute $x$ as a leading eigenvector $\sum_{i \in I} a_i a_i^T$. We then compute an interval in $\rho$ for which $x$ satisfies the basic consistency condition:
\[
\max_{i \notin I} (a_i^T x)^2 \equiv \rho_\mathrm{min} \leq \rho \leq \rho_\mathrm{max} \equiv \min_{i \in I} (a_i^T x)^2.
\]
Note that this interval could be empty, in which case $I$ cannot be optimal. We then minimize $\mathrm{gap}(\rho)$ over the interval $[\rho_\mathrm{min},\rho_\mathrm{max}]$. If the minimum is zero for some $\rho=\rho^*$, then the pattern $I$ is optimal for the sparse PCA problem in (\ref{eq:pca-card}) with $\rho=\rho^*$. 

Minimizing the convex function $\mathrm{gap}(\rho)$ can be done very efficiently using binary search. The initial cost of forming the matrix $\sum_{i=1}^nY_i$, which is a simple outer matrix product, is $O(n^3)$. At each iteration of the binary search, a subgradient of $\mathrm{gap}(\rho)$ can then be computed by solving a maximum eigenvalue problem, at a cost of $O(n^2)$. This means that the complexity of finding the optimal $\rho$ over a given interval $[\rho_\mathrm{min},\rho_\mathrm{max}]$ is $O(n^2 \log_2 ((\rho_\mathrm{max}-\rho_\mathrm{min})/\epsilon))$, where $\epsilon$ is the target precision. Overall then, the total cost of testing the optimality of a pattern $I$ is $O(n^3 + n^2 \log_2 ((\rho_\mathrm{max}-\rho_\mathrm{min})/\epsilon))$.

Note that an additional benefit of deriving explicit dual feasible points $Y_i$ is that plugging these solutions into the objective of problem (\ref{eq:dual-y}):
\[\BA{ll}
\mbox{min.} & \lambdamax\left(\sum_{i=1}^n Y_i\right)\\
\mbox{s.t.} & Y_i\succeq B_i,~Y_i\succeq0,\quad i=1,\ldots,n.
\EA\]
produces an \emph{upper bound} on the optimum value of the original sparse PCA problem (\ref{eq:pca-card}) even when the pattern $I$ is not optimal (all we need is a $\rho$ satisfying the consistency condition).

\subsection{Solution improvements and randomization}
When these conditions are not satisfied, the relaxation~(\ref{eq:primal-p}) has an optimal solution with rank strictly larger than one, hence is not tight. At such a point, we can use a different relaxation such as DSPCA by \citet{dasp04a} to try to get a better solution. We can also apply randomization techniques to improve the quality of the solution of problem (\ref{eq:primal-p})~\citep{Ben-02}.

\section{Applications}
\label{sec:apps} In this section, we discuss some applications of sparse PCA to subset selection and compressed sensing.

\subsection{Subset selection}
\label{sec:subset} We consider $p$ data points in $\reals^n$, in a data matrix $X \in \reals^{p \times n}$. We assume that we are given real numbers $y \in \reals^p$ to predict from $X$ using linear regression, estimated by least squares. We are thus looking for $w \in \reals^n$ such that $\|y-Xw\|^2$ is minimum. In the subset selection problem, we are looking for sparse coefficients $w$, i.e., a vector $w$ with many zeros. We thus consider the problem:
\BEQ
s(k) = \min_{w \in \reals^n, \ \Card{w} \leq k } \| y- Xw \|^2 .
\EEQ
Using the sparsity pattern $u \in \{0,1\}^n$, and optimizing with respect to $w$, we have
\BEQ
s(\rho) = \min_{u \in \{0,1\}^n , \ \ones^T  u \leq k }  \|y\|^2 - y^T  X(u) (    X(u)^T  X(u)  )^{-1} X(u)^T  y, 
\EEQ
where $X(u) = X \diag(u)$. We can rewrite
$y^T  X(u) (    X(u)^T  X(u)  )^{-1} X(u)^T  y$ as the largest generalized eigenvalue of the pair $( X(u)^T  yy^T  X(u), X(u)^T  X(u))$, i.e., as
$$
y^T  X(u) (    X(u)^T  X(u)  )^{-1} X(u)^T  y = 
\max_{w \in \reals^n} \frac{ w^T  X(u)^T  yy^T  X(u) w}{
w^T  X(u)^T  X(u) w}.
$$
We thus have:
\BEQ
s(k) = \|y\|^2 - \max_{u \in \{0,1\}^n , \ones^T  u \leq k } \max_{w \in \reals^n }   \frac{ w^T  \diag(u) X^T  yy^T  X \diag(u)  w}{
w^T  \diag(u)  X^T  X \diag(u) ) w} .
\EEQ
Given a pattern $u \in \{0,1\}^n$, let 
\[
s_0 = y^T  X(u) (    X(u)^T  X(u)  )^{-1} X(u)^T  y
\]
be the largest generalized eigenvalue corresponding to the pattern $u$. The pattern is optimal if and only if the largest generalized eigenvalue of the pair $\{ X(v)^T  yy^T  X(v), X(v)^T  X(v)\}$ is less than $s_0$ for any $v \in \{0,1\}^n$ such that  $v^T \ones = u^T \ones$.
This is equivalent to the optimality of $u$ for the sparse PCA problem with matrix  $X^T  yy^T  X - s_0 X^T  X$, which can be checked using the sparse PCA optimality conditions derived in the previous sections. 

Note that unlike in the sparse PCA case, this convex relaxation does not immediately give a simple bound on the optimal value of the subset selection problem. However, we get a bound of the following form: when $v \in \{0,1\}^n$ and $w \in \reals^n$ is such that $\ones^T v = k$ with: 
\[ 
w^T \left( X(v)^T  yy^T  X(v) - s_0 X(v)^T  X(v) \right) w \leq B,
\] 
where $B\geq 0$ (because $s_0$ is defined from $u$), we have:
\BEAS
\| y\|^2 - s_0 \geq  s(k) & \geq  & \| y\|^2 - s_0 - B \left(  \min_{v 
\in \{0,1\}^n, \ones^T v = k} \lambda_{\min}
(   X(v)^T  X(v)  ) \right)^{-1}
 \\
 & \geq & \| y\|^2 - s_0 - B \left(   \lambda_{\min}
(   X^T  X  ) \right)^{-1}.
 \EEAS
This bound gives a sufficient condition for optimality in subset selection, for any problem instance and any given subset. This is to be contrasted with the sufficient conditions derived for particular algorithms, such as the LASSO~\citep{yuanlin,Zhaoyu} or backward greedy selection~\citep{couvreur}. Note that some of these optimality conditions are often based on sparse eigenvalue problems (see \citet[\S 2]{Mein06a}), hence our convex relaxations helps both in checking sufficient conditions for optimality (before the algorithm is run) and in testing a posteriori the optimality of a particular solution.
 
\subsection{Sparse recovery}\label{sec:recov}
Following \cite{Cand05} (see also \cite{Dono05}), we seek to recover a signal $f\in\reals^n$ from corrupted measurements $y=Af+e$, where $A\in\reals^{m \times n}$ is a coding matrix and $e\in\reals^m$ is an unknown vector of errors with low cardinality. This can be reformulated as the problem of finding the sparsest solution to an underdetermined linear system:
\BEQ \label{eq:lp-mincard}
\BA{ll}
\mbox{minimize} & \|x\|_0\\
\mbox{subject to} & Fx=Fy\\
\EA\EEQ
where $\|x\|_0=\Card(x)$ and $F\in\reals^{p \times m}$ is a matrix such that $FA=0$. A classic trick to get good approximate solutions to problem (\ref{eq:lp-mincard}) is to substitute the (convex) $\ell_1$ norm to the (combinatorial) $\ell_0$ objective above, and solve instead:
\BEQ \label{eq:lp-minone}
\BA{ll}
\mbox{minimize} & \|x\|_1\\
\mbox{subject to} & Fx=Fy,
\EA\EEQ
which is equivalent to a linear program in $x\in\reals^m$. Following \cite{Cand05}, given a matrix $F \in\reals^{p \times m}$  and an integer $S$ such that $0<S\leq m$,  we define its \emph{restricted isometry} constant $\delta_S$ as the smallest number such that for any subset $I\subset [1,m]$ of cardinality at most $S$ we have:
\BEQ\label{eq:ricd}
(1-\delta_S)\|c\|^2 \leq \|F_Ic\|^2 \leq (1+\delta_S)\|c\|^2,
\EEQ
for all $c\in\reals^{|I|}$, where $F_I$ is the submatrix of $F$ formed by keeping only the columns of $F$ in the set $I$. The following result then holds.
\begin{proposition} \textbf{(\cite{Cand05})}. 
Suppose that the restricted isometry constants of a matrix $F \in\reals^{p \times m}$ satisfy 
\BEQ\label{eq:ric}
\delta_S+\delta_{2S}+\delta_{3S} < 1
\EEQ
for some integer $S$ such that $0<S\leq m$, then if $x$ is an optimal solution of the convex program:
\[
\BA{ll}
\mbox{minimize} & \|x\|_1\\
\mbox{subject to} & Fx=Fy\\
\EA
\]
such that $\Card{x}\leq S$ then $x$ is also an optimal solution of the combinatorial problem:
\[
\BA{ll}
\mbox{minimize} & \|x\|_0\\
\mbox{subject to} & Fx=Fy.\\
\EA\
\]
\end{proposition}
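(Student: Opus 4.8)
The plan is to reduce the statement to the classical uniqueness property of sufficiently sparse solutions of a linear system, which in fact uses much less than the stated hypothesis. First I would extract from $\delta_S+\delta_{2S}+\delta_{3S}<1$ only what is needed: the restricted isometry constants in (\ref{eq:ricd}) are nonnegative, so the inequality implies $\delta_{2S}<1$. Then for any nonzero $c$ with $\Card(c)\leq 2S$ the lower bound in (\ref{eq:ricd}) gives $\|Fc\|^2\geq(1-\delta_{2S})\|c\|^2>0$; equivalently, the null space $\nullspace(F)$ contains no nonzero vector with at most $2S$ nonzero entries.

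Next I would observe that the combinatorial problem $\min\{\|x\|_0:~Fx=Fy\}$ has optimal value at most $S$, since the given $\ell_1$-optimal point $x$ is itself feasible for it and has $\Card(x)\leq S$ by hypothesis. Hence any optimizer $x'$ of the combinatorial problem satisfies $\Card(x')\leq\Card(x)\leq S$. The difference $h=x-x'$ is then supported on $\mathrm{supp}(x)\cup\mathrm{supp}(x')$, a set of at most $2S$ indices, and $Fh=0$ because $Fx=Fx'=Fy$. By the first step $h=0$, that is, $x=x'$, and since $x'$ is optimal for the combinatorial problem so is $x$, which is the claim. Note that only the feasibility and $S$-sparsity of $x$ were used here, not its $\ell_1$-optimality.

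There is essentially no obstacle in this direction; the only point requiring care is the bookkeeping that yields $\Card(x')\leq S$, which is what makes $h$ genuinely $2S$-sparse rather than merely controlled in norm. To stay closer to \cite{Cand05}, one could instead invoke its exact-recovery theorem: under $\delta_S+\delta_{2S}+\delta_{3S}<1$ the program (\ref{eq:lp-minone}) recovers every $S$-sparse signal exactly, so applied to the combinatorial optimizer $x'$ (which is $S$-sparse by the argument above) it shows that $x'$ is the unique solution of (\ref{eq:lp-minone}) with data $Fx'=Fy$, again forcing $x=x'$. In that route the real content is the recovery theorem itself, whose proof partitions the complement of the support into blocks of geometrically decreasing magnitude and plays the RIP lower bound on the leading block against the RIP upper bounds on the tail; that block-decomposition estimate is the step I would expect to dominate the work if a self-contained proof were wanted.
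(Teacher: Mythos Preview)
Your argument is correct. Note, however, that the paper does not actually give its own proof of this proposition: it is stated with attribution to \cite{Cand05} and used as a black box, so there is nothing in the paper to compare against beyond the citation itself.

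Your first route is the cleanest one and, as you observe, uses only $\delta_{2S}<1$ together with the feasibility and $S$-sparsity of $x$; the $\ell_1$-optimality of $x$ is irrelevant for the conclusion \emph{as the proposition is phrased here}. This is a genuine simplification relative to the full Cand\`es result, whose real content---and the reason for the stronger hypothesis $\delta_S+\delta_{2S}+\delta_{3S}<1$---is that the $\ell_1$ minimizer is automatically $S$-sparse whenever an $S$-sparse feasible point exists. In the proposition as stated that sparsity is simply assumed, which is why your null-space argument suffices. Your second route, deducing the statement from the recovery theorem in \cite{Cand05}, is also correct and is presumably closer to what the authors had in mind when they attached the full RIP hypothesis.
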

In other words, if condition (\ref{eq:ric}) holds for some matrix $F$ such that $FA=0$, then perfect recovery of the signal $f$ given $y=Af+e$ provided the error vector satisfies $\Card(e)\leq S$. Our key observation here is that the restricted isometry constant $\delta_S$ in condition (\ref{eq:ric}) can be computed by solving the following sparse maximum eigenvalue problem:
\[\BA{rll}
(1+\delta_S)\leq&\mbox{max.} & x^T(F^TF)x\\
&\mbox{s. t.} & \Card(x)\leq S\\
&& \|x\|=1,\\
\EA\]
in the variable $x\in\reals^m$ and another sparse maximum eigenvalue problem on $\alpha \idm-FF^T$ with $\alpha$ sufficiently large, with $\delta_S$ computed from the tightest one. In fact, (\ref{eq:ricd}) means that:
\BEAS
(1+\delta_S) &\leq& \max_{\{I\subset[1,m]:~|I|\leq S\}}~\max_{\|c\|=1} c^TF_I^TF_Ic\\
&=& \max_{\{u\in\{0,1\}^n:~\ones^Tu\leq S\}}~\max_{\|x\|=1} x^T\diag(u)F^TF\diag(u)x\\
&=& \max_{\{\|x\|= 1,~\Card(x)\leq S\}} x^TF^TFx,
\EEAS
hence we can compute an upper bound on $\delta_S$ by duality, with:
\[
(1+\delta_S) \leq \inf_{\rho \geq 0} \phi(\rho) + \rho S
\]
where $\phi(\rho)$ is defined in (\ref{eq:pca-card}). This means that while \cite{Cand05} obtained an asymptotic proof that some random matrices satisfied the restricted isometry condition (\ref{eq:ric}) with overwhelming probability (i.e. exponentially small probability of failure), whenever they are satisfied, the \emph{tractable} optimality conditions and upper bounds we obtain in \mysec{tightness} for sparse PCA problems allow us to prove, \emph{deterministically}, that a finite dimensional matrix satisfies the restricted isometry condition in (\ref{eq:ric}). Note that \cite{Cand05} provide a slightly weaker condition than (\ref{eq:ric}) based on restricted orthogonality conditions and extending the results on sparse PCA to these conditions would increase the maximum $S$ for which perfect recovery holds. In practice however, we will see in Section \ref{ss:num-sparse-rec} that the relaxations in (\ref{eq:dual-y}) and \cite{dasp04a} do provide very tight upper bounds on sparse eigenvalues of random matrices but solving these semidefinite programs for very large scale instances remains a significant challenge.

\section{Numerical Results}
\label{sec:numer}
In this section, we first compare the various methods detailed here on artificial examples, then test their performance on a biological data set. PathSPCA, a MATLAB code reproducing these results may be downloaded from the authors' web pages.

\subsection{Artificial Data}
We generate a matrix $U$ of size 150 with uniformly distributed
coefficients in $[0,1]$. We let $v\in\reals^{150}$ be a sparse vector with:
\[
v_i=\left\{\BA{ll}
1 & \mbox{if } i \leq 50\\
1/(i-50) & \mbox{if } 50 < i \leq 100\\
0 & \mbox{otherwise}\\ \EA \right.
\]
We form a test matrix $\Sigma=U^TU+\sigma vv^T$, where $\sigma$ is the signal-to-noise ratio. We first compare the relative performance of the algorithms in Section \ref{sec:greedy} at identifying the correct sparsity pattern in $v$ given the matrix $\Sigma$. The resulting ROC curves are plotted in figure \ref{fig:roc} for $\sigma=2$. On this example, the computing time for the approximate greedy algorithm in \mysec{greedy-path} was 3 seconds versus 37 seconds for the full greedy solution in \mysec{greedy-classic}. Both algorithms produce almost identical answers. We can also see that both sorting and thresholding ROC curves are dominated by the greedy algorithms.
\begin{figure}[ht]
\begin{center}
\psfrag{fpr}[t][b]{False Positive Rate}
\psfrag{tpr}[b][t]{True Positive Rate}
\includegraphics[width=.6\textwidth]{./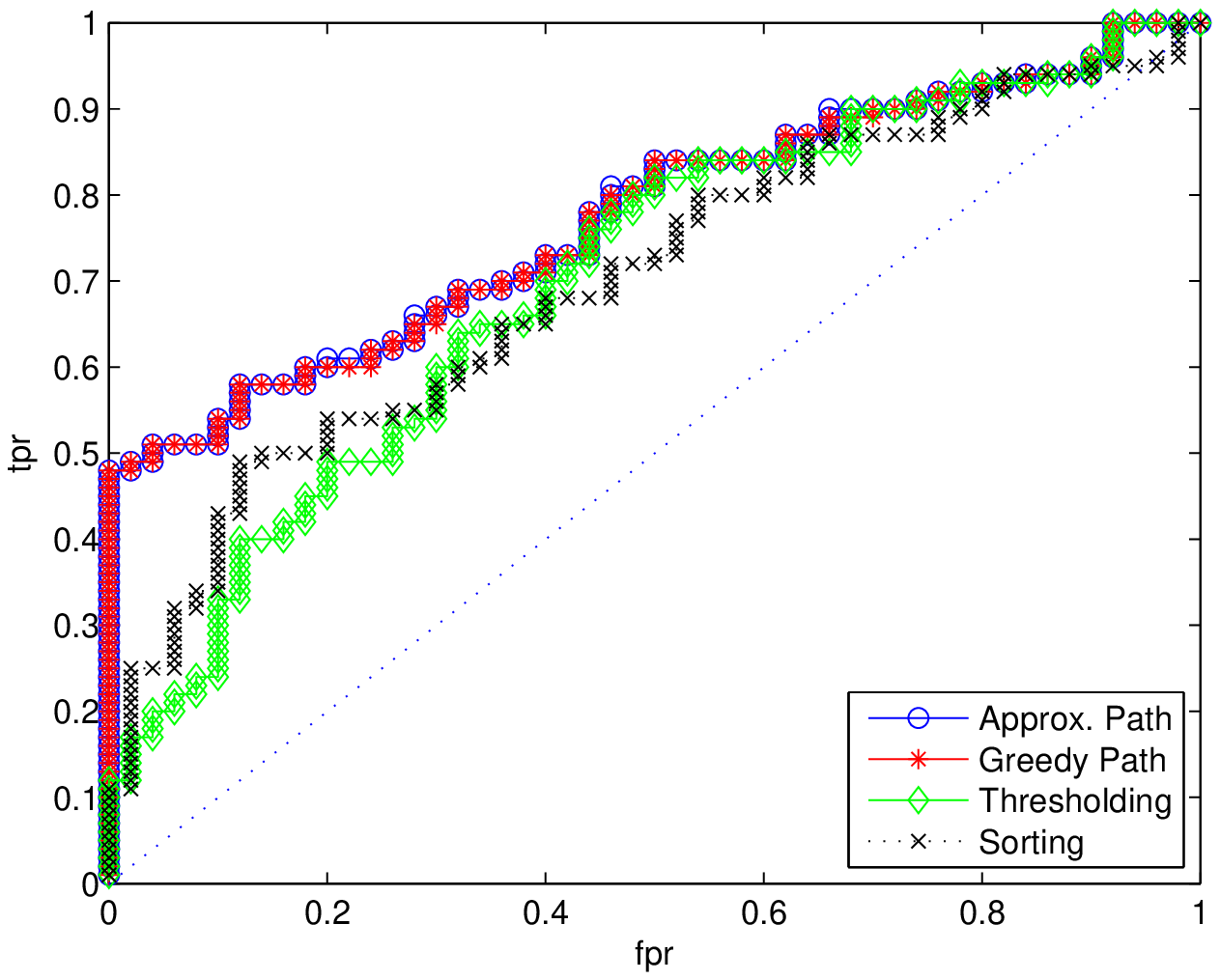} \caption{ROC curves for sorting, thresholding, fully greedy solutions (\mysec{greedy-classic}) and approximate greedy solutions (\mysec{greedy-path}) for $\sigma=2$. \label{fig:roc}}
\end{center}
\end{figure}

We then plot the variance versus cardinality tradeoff curves for various values of the signal-to-noise ratio. In figure \ref{fig:dspca},  We notice that the magnitude of the error (duality gap) decreases with the signal-to-noise ratio. Also, because of the structure of our problem, there is a kink in the variance at the (exact) cardinality 50 in each of these curves. Note that for each of these examples, the error (duality gap) is minimal precisely at the kink.

Next, we use the DSPCA algorithm of \citet{dasp04a} to find better solutions where the greedy codes have failed to obtain globally optimal solutions. In \citet{dasp04a}, it was shown that an upper bound on (\ref{eq:pca-card}) can be computed as:
\[
\phi(\rho) \leq \min_{|U_{ij}|\leq \rho} \lambdamax(\Sigma+U).
\]
which is a convex problem in the matrix $U\in\symm_n$. Note however that the cost of solving this relaxation for a \emph{single} $\rho$ is $O(n^{4}\sqrt{\log n})$ versus $O(n^3)$ for a full path of approximate solutions. Also, the results in \citet{dasp04a} do not provide any hint on the value of $\rho$, but we can use the breakpoints coming from suboptimal points in the greedy search algorithms in \mysec{greedy-path} and the consistency intervals in \eq{rho-interval}. In figure \ref{fig:dspca} we plot the variance versus cardinality tradeoff curve for $\sigma=10$. We plot greedy variances (solid line), dual upper bounds from \mysec{eff-opt} (dotted line) and upper bounds computed using DSPCA (dashed line). 

\begin{figure}[ht]
\begin{center}
\psfrag{card}[t][b]{Cardinality}
\psfrag{var}[b][t]{Variance}
\includegraphics[width=.49\textwidth]{./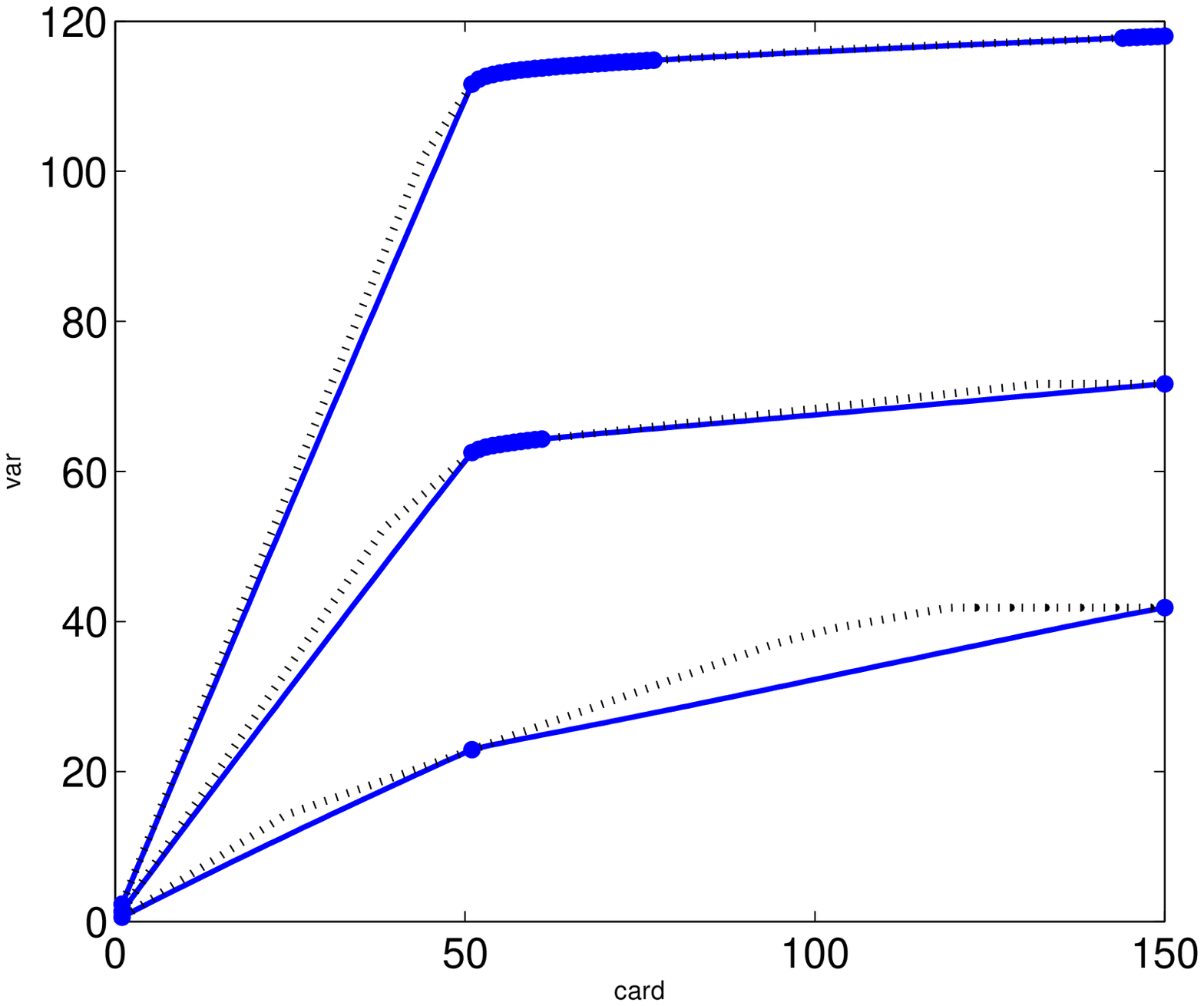}
\includegraphics[width=.49\textwidth]{./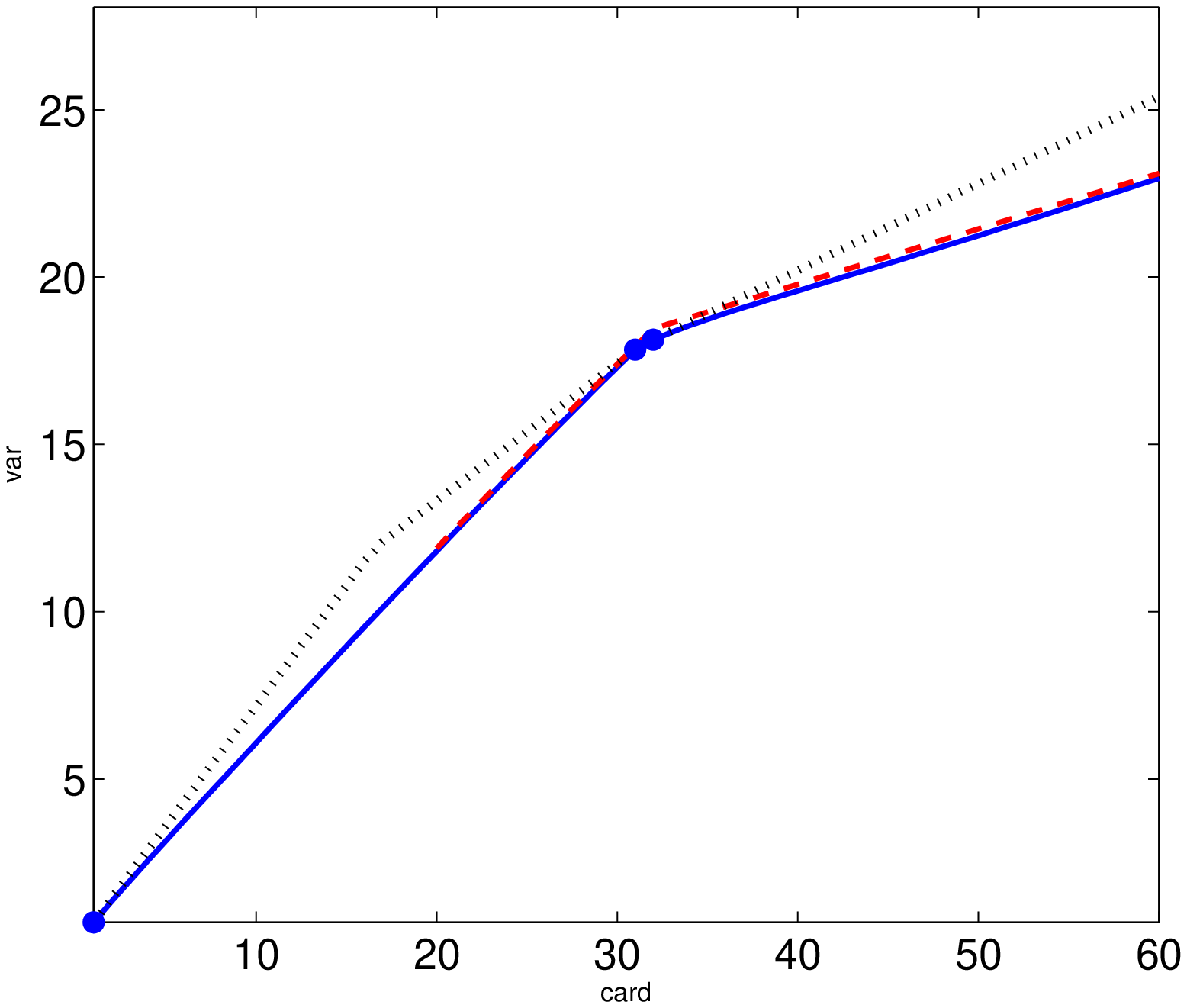} 
\caption{\emph{Left:} variance versus cardinality tradeoff curves for $\sigma=10$ (bottom), $\sigma=50$ and $\sigma=100$ (top). We plot the variance (solid line) and the dual upper bounds from \mysec{eff-opt} (dotted line) for each target cardinality. \emph{Right:} variance versus cardinality tradeoff curve for $\sigma=10$. We plot greedy variances (solid line), dual upper bounds from \mysec{eff-opt} (dotted line) and upper bounds computed using DSPCA (dashed line). Optimal points (for which the relative duality gap is less than $10^{-4}$) are in bold.\label{fig:dspca}}
\end{center}
\end{figure}

\subsection{Subset selection}
We now present simulation experiments on synthetic datasets for the subset selection problem. We consider data sets generated from a sparse linear regression problem and study optimality for the subset selection problem, given the exact cardinality of the generating vector. In this setting, it is known that regularization by the $\ell_1$-norm, a procedure also known as the Lasso~\citep{tibs96}, will asymptotically lead to the correct solution if and only if a certain consistency condition is satisfied~\citep{yuanlin,Zhaoyu}. Our results provide here a tractable test the optimality of solutions obtained from various algorithms such as the Lasso, forward greedy or backward greedy algorithms.
 
In \myfig{optimality}, we consider two pairs of randomly generated examples in dimension 16, one for which the lasso is provably consistent, one for which it isn't. We perform 50 simulations with 1000 samples and varying noise and compute the average frequency of optimal subset selection for Lasso and greedy backward algorithm together with the frequency of provable optimality (i.e., where our method did ensure a posteriori that the point was optimal). We can see that the backward greedy algorithm exhibits good performance (even in the Lasso-inconsistent case) and that our sufficient optimality condition is satisfied as long as there is not too much noise. In \myfig{MSE}, we plot the average mean squared error versus cardinality, over 100 replications, using forward (dotted line) and backward (circles) selection, the Lasso (large dots) and exhaustive search (solid line). The plot on the left shows the results when the Lasso consistency condition is satisfied, while the plot on the right shows the mean squared errors when the consistency condition is not satisfied.
The two sets of figures do show that the LASSO is consistent only when the consistency condition is satisfied, while the backward greedy algorithm finds the correct pattern if the noise is small enough~\citep{couvreur} even in the LASSO inconsistent case.

\begin{figure}[hp]
\begin{center}
\psfrag{logsig}[t][b]{Noise Intensity}
\psfrag{ProbOpt}[b][t]{Probability of Optimality}
\includegraphics[scale=.41]{./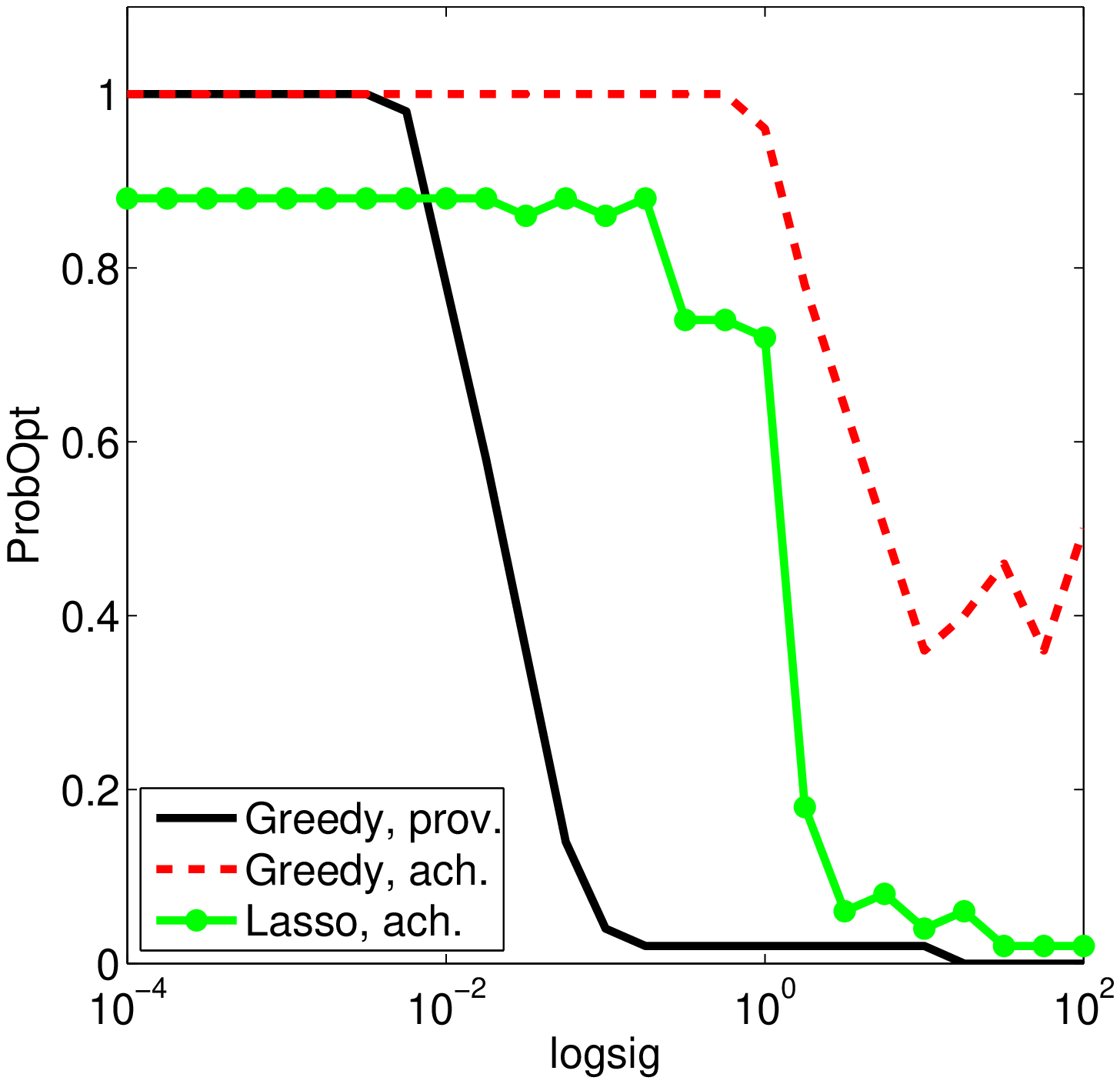}
\includegraphics[scale=.41]{./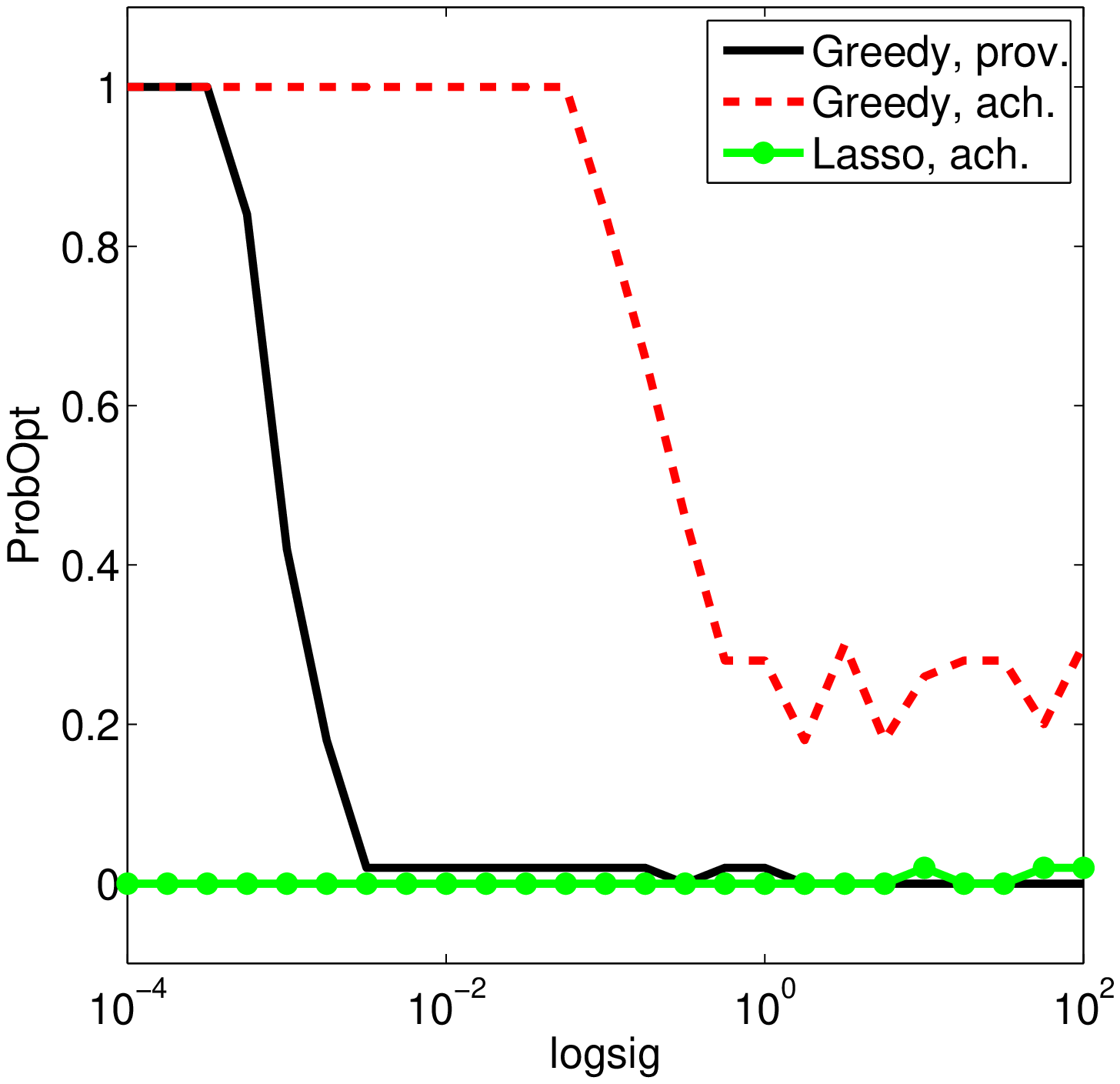}
\caption{Backward greedy algorithm and Lasso. We plot the probability of achieved (dotted line) and provable (solid line) optimality versus noise for greedy selection against Lasso (large dots), for the subset selection problem on a noisy sparse vector. \emph{Left:} Lasso consistency condition satisfied. \emph{Right:} consistency condition not satisfied.\label{fig:optimality}}
\end{center}
\end{figure}
 
\begin{figure}[hp]
\begin{center}
\psfrag{subcard}[t][b]{Subset Cardinality}
\psfrag{mse}[b][t]{Mean Squared Error}
\includegraphics[scale=.39]{./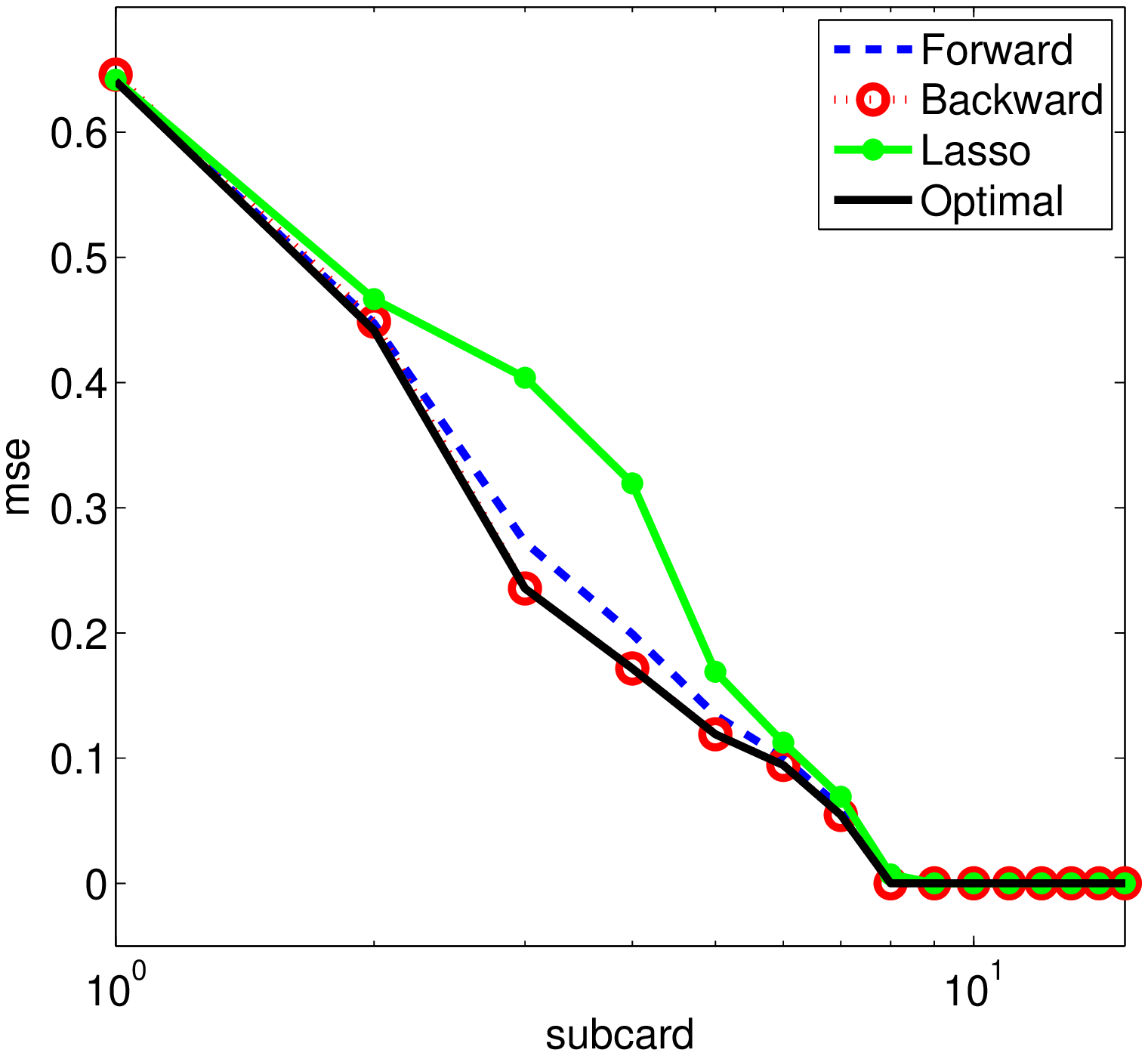}
\includegraphics[scale=.39]{./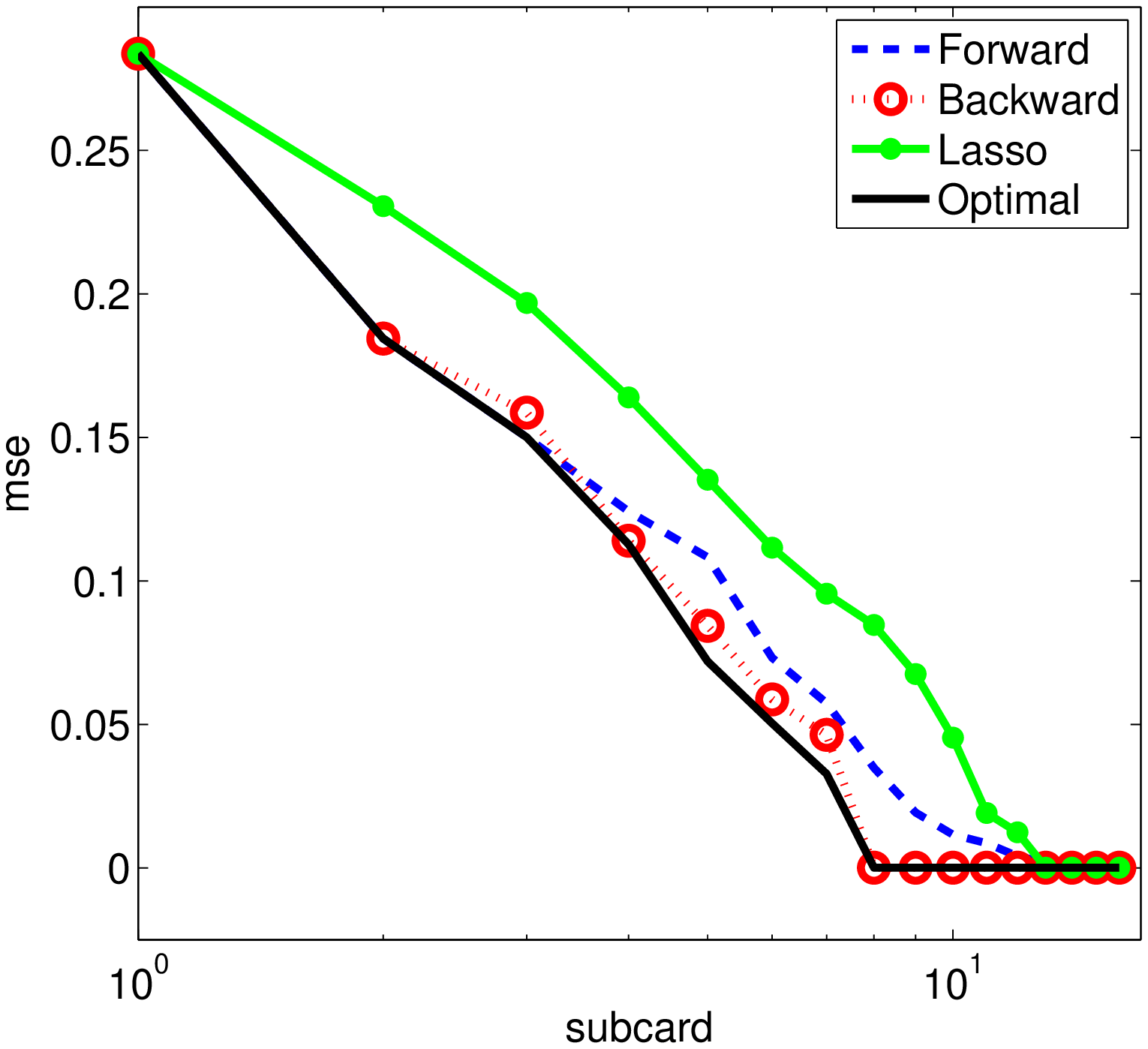}
\caption{Greedy algorithm and Lasso. We plot the average mean squared error versus cardinality, over 100 replications, using forward (dotted line) and backward (circles) selection, the Lasso (large dots) and exhaustive search (solid line). \emph{Left:} Lasso consistency condition satisfied. \emph{Right:} consistency condition not satisfied.\label{fig:MSE}}
\end{center}
\end{figure}

\subsection{Sparse recovery}\label{ss:num-sparse-rec}
Following the results of \mysec{recov}, we compute the upper and lower bounds on sparse eigenvalues produced using various algorithms. We study the following problem:
\[\BA{rll}
\mbox{maximize} & x^T\Sigma x\\
\mbox{subject to} & \Card(x)\leq S\\
& \|x\|=1,\\
\EA\]
where we pick $F$ to be normally distributed and small enough so that computing sparse eigenvalues by exhaustive search is numerically feasible. We plot the maximum sparse eigenvalue versus cardinality, obtained using exhaustive search (solid line), the approximate greedy (dotted line) and fully greedy (dashed line) algorithms. We also plot the upper bounds obtained by minimizing the gap of a rank one solution (squares), by solving the semidefinite relaxation explicitly (stars) and by solving the DSPCA dual (diamonds). On the left, we use a matrix $\Sigma=F^TF$ with $F$ Gaussian. On the right, $\Sigma=uu^T/\|u\|^2+2V$, where $u_i=1/i,~i=1,\ldots,n$ and $V$ is matrix with coefficients uniformly distributed in $[0,1]$. Almost all algorithms are provably optimal in the noisy rank one case (as well as in many of the biological examples that follow), while Gaussian random matrices are harder. Note however, that the duality gap between the semidefinite relaxations and the optimal solution is very small in both cases, while our bounds based on greedy solutions are not as good. This means that solving the relaxations in (\ref{eq:dual-y}) and \cite{dasp04a} could provide very tight upper bounds on sparse eigenvalues of random matrices. However, solving these semidefinite programs for very large values of $n$ remains a significant challenge.

\begin{figure}
\begin{center}
\psfrag{card}[t][b]{Cardinality}
\psfrag{var}[b][t]{Max. Eigenvalue}
\includegraphics[scale=.39]{./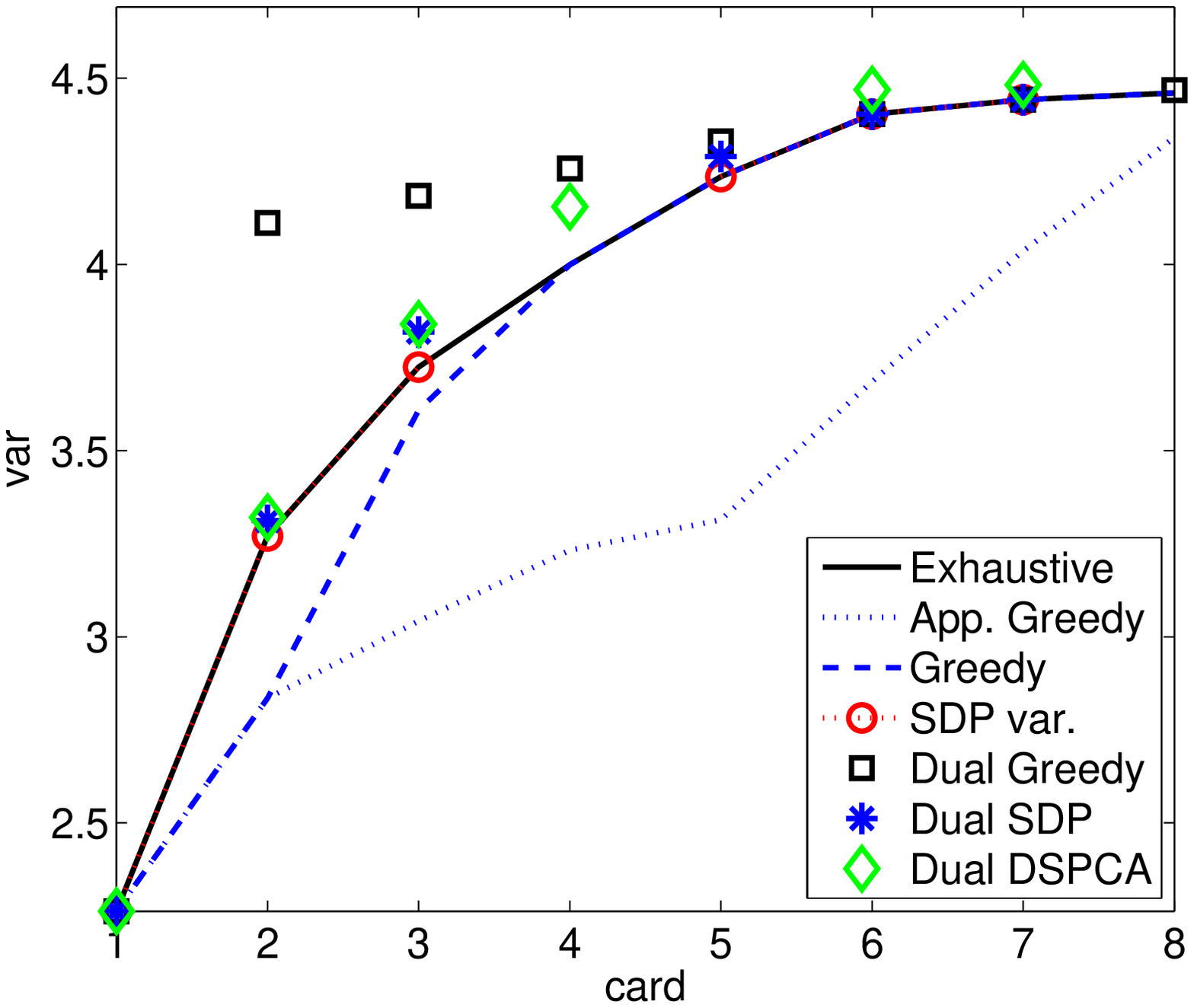}
\includegraphics[scale=.39]{./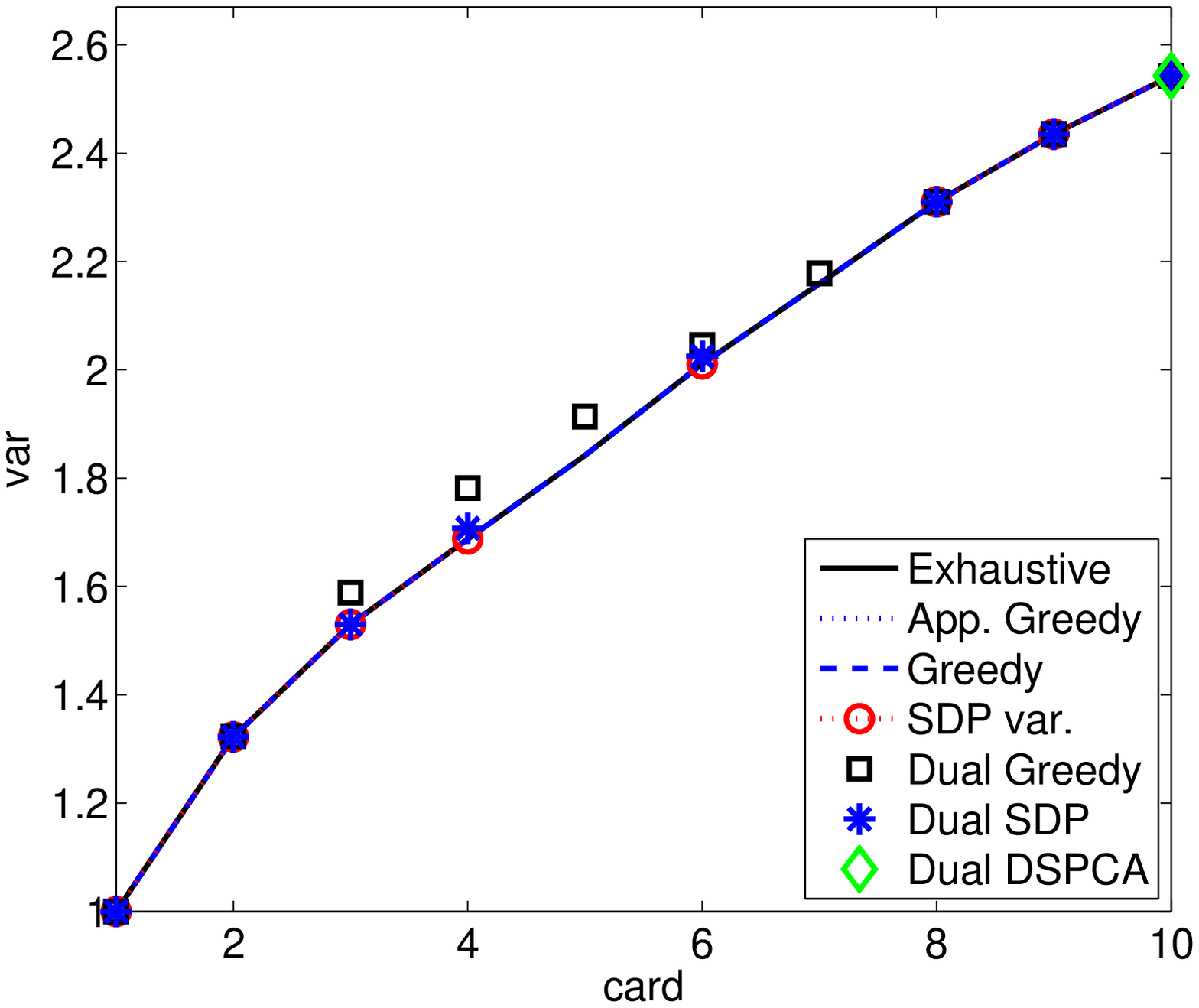}
\caption{Upper and lower bound on sparse maximum eigenvalues. We plot the maximum sparse eigenvalue versus cardinality, obtained using exhaustive search (solid line), the approximate greedy (dotted line) and fully greedy (dashed line) algorithms. We also plot the upper bounds obtained by minimizing the gap of a rank one solution (squares), by solving the semidefinite relaxation explicitly (stars) and by solving the DSPCA dual (diamonds). \emph{Left:} On a matrix $F^TF$ with $F$ Gaussian. \emph{Right:} On a sparse rank one plus noise matrix.\label{fig:recov}}
\end{center}
\end{figure}

\subsection{Biological Data}
We run the algorithm of \mysec{greedy-path} on two gene expression data sets, one on Colon cancer from \cite{Alon99}, the other on Lymphoma from \cite{Aliz00}. We plot the variance versus cardinality tradeoff curve in figure \ref{fig:eisen-bio}, together with the dual upper bounds from \mysec{eff-opt}. In both cases, we consider the 500 genes with largest variance. Note that for many cardinalities, we have optimal or very close to optimal solutions. In Table \ref{tab:rankedgenes}, we also compare the 20 most important genes selected by the second sparse PCA factor on the colon cancer data set, with the top 10 genes selected by the RankGene software by \cite{Su03}. We observe that 6 genes (out of an original 4027 genes) were both in the top 20 sparse PCA genes and in the top 10 Rankgene genes.
\begin{table}[th]
\begin{center}
\begin{tabular}{|p{7ex}|p{9ex}|p{7ex}|p{24ex}|}
\hline
Rank & Rankgene & GAN & Description \\ \hline
3&8.6 & J02854 & \footnotesize{Myosin regul.} \\\hline 
6&18.9 & T92451 & \footnotesize{Tropomyosin}\\ \hline 
7&31.5& T60155 & \footnotesize{Actin}  \\ \hline 
8&25.1 & H43887 & \footnotesize{Complement fact. D prec.} \\ \hline
10&2.1& M63391 & \footnotesize{Human desmin} \\ \hline 
12&32.3&T47377&\footnotesize{S-100P Prot.}\\
\hline
\end{tabular}
\caption{6 genes (out of 4027) that were both in the top 20 sparse PCA genes and in the top 10 Rankgene genes.}
\label{tab:rankedgenes}
\end{center}\end{table}

\begin{figure}[ht]
\begin{center}
\psfrag{card}[t][b]{Cardinality}
\psfrag{var}[b][t]{Variance}
\includegraphics[width=.7\textwidth]{./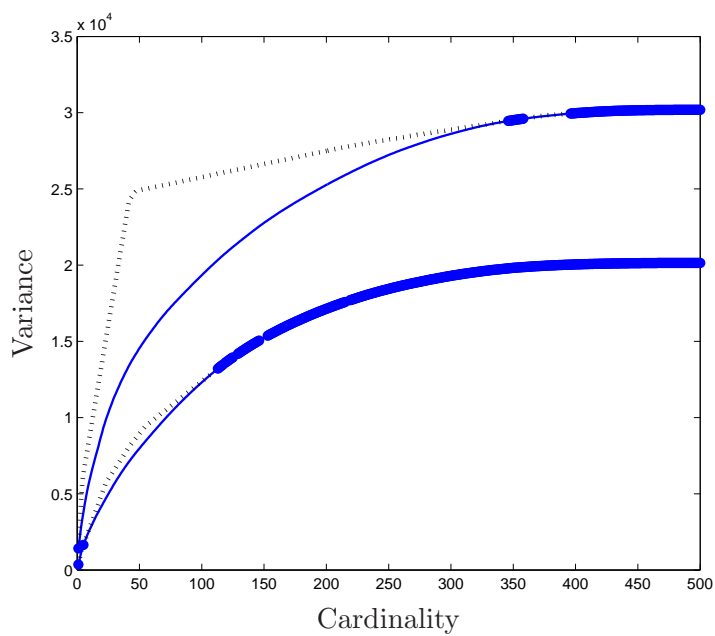} \caption{Variance (solid lines) versus cardinality tradeoff curve for two gene expression data sets, lymphoma (top) and colon cancer (bottom), together with dual upper bounds from \mysec{eff-opt} (dotted lines). Optimal points (for which the relative duality gap is less than $10^{-4}$) are in bold.\label{fig:eisen-bio}}
\end{center}
\end{figure}

\section{Conclusion}
We have presented a new convex relaxation of sparse principal component analysis, and derived tractable sufficient conditions for optimality. These conditions go together with efficient greedy algorithms that provide candidate solutions, many of which turn out to be optimal in practice. The resulting upper bounds also have direct applications to problems such as sparse recovery, subset selection or LASSO variable selection. Note that we extensively use this convex relaxation to test optimality and provide bounds on sparse extremal eigenvalues, but we almost never attempt to solve it numerically (except in some of the numerical experiments), which would provide optimal bounds. Having $n$ matrix variables of dimension $n$, the problem is of course extremely large and finding numerical algorithms to directly optimize these relaxation bounds would be an important extension of this work.

\vspace*{1cm}

\appendix

\section{Expansion of eigenvalues}
\label{app:perturbations}
In this appendix, we consider various results on expansions of eigenvalues we use in order to derive sufficient conditions. The following proposition derives a second order expansion of the set of eigenvectors corresponding to a single eigenvalue.

\begin{proposition}
\label{prop:pert}
Let $N \in \symm^n$. Let $\lambda_0$ be an eigenvalue of $N$, with multiplicity $r$ and eigenvectors $U \in \reals^{n \times r}$ (such that $U^T U  = \idm$). Let $\Delta$ be a matrix in $ \symm^n$.
If $\|\Delta\|_F$ is small enough, the matrix $N+\Delta$ has exactly $r$  (possibly equal) eigenvalues around $\lambda_0$ and if we denote by $(N+\Delta)_{\lambda_0}$ the projection of the matrix $N+\Delta$ onto that eigensubspace, we have:

\BEAS
\! (N+\Delta)_{\lambda_0}
\!&\!\! = \!\!&   \lambda_0 U U^T   + U U^T \Delta U U^T
+ \lambda_0 UU^T \Delta ( \lambda_0 \idm - N )^\dagger + 
\lambda_0  ( \lambda_0 \idm - N )^\dagger  \Delta  UU^T     \\
& &\!\!\!  \!\!\! \!\!\!  \!\!\!  + U U^T \Delta U U^T \Delta   ( \lambda_0 \idm - N )^\dagger 
+  ( \lambda_0 \idm - N )^\dagger  \Delta U U^T \Delta U U^T
+ U U^T \Delta  ( \lambda_0 \idm - N )^\dagger  U U^T  \\
& & \!\!\!  \!\!\!\!\!\!  \!\!\!  + \lambda_0 U U^T \Delta  ( \lambda_0 \idm - N )^\dagger \Delta   ( \lambda_0 \idm - N )^\dagger 
+  \lambda_0 ( \lambda_0 \idm - N )^\dagger  \Delta ( \lambda_0 \idm - N )^\dagger \Delta  UU^T
\\
& & \!\!\!  \!\!\! \!\!\!  \!\!\!  
+ \lambda_0 ( \lambda_0 \idm - M )^\dagger \Delta U U^T \Delta  ( \lambda_0 \idm - M )^\dagger + O(\|\Delta\|_F^3)
\EEAS
which implies the following expansion for the sum of the $r$ eigenvalues in the neigborhood of $\lambda_0$:
\BEAS
\Tr (N+\Delta)_{\lambda_0}   & = &  r \lambda_0 + \Tr U^T \Delta U + \Tr
U^T \Delta  ( {\lambda_0} \idm - N )^\dagger \Delta U  \\
& & +
{\lambda_0} \Tr ( {\lambda_0} \idm - N )^\dagger \Delta U U^T \Delta  ( {\lambda_0} \idm - N )^\dagger + O(\|\Delta\|_F^3).
\EEAS

\end{proposition}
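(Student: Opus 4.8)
The plan is to use the resolvent / Riesz-projection formalism. Write $P=UU^{T}$ for the orthogonal projector onto the $\lambda_{0}$-eigenspace of $N$ and $S=(\lambda_{0}\idm-N)^{\dagger}$ for the associated reduced resolvent, so that $P^{2}=P$, $NP=PN=\lambda_{0}P$, $PS=SP=0$ and $S(\lambda_{0}\idm-N)=(\lambda_{0}\idm-N)S=\idm-P$. Fix a small circle $\Gamma\subset\complex$ centered at $\lambda_{0}$ whose interior meets the spectrum of $N$ only at $\lambda_{0}$. Since $R(z)=(z\idm-N)^{-1}$ is bounded on the compact set $\Gamma$, for $\|\Delta\|_{F}$ small enough $R_{\Delta}(z)=(z\idm-N-\Delta)^{-1}$ is well defined on $\Gamma$ and, by continuity of the spectrum, $N+\Delta$ has exactly $r$ eigenvalues (with multiplicity) inside $\Gamma$. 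The spectral projector onto the corresponding invariant subspace is $\Pi(\Delta)=\frac{1}{2\pi i}\oint_{\Gamma}R_{\Delta}(z)\,dz$, it commutes with $N+\Delta$, and the object of interest is
\[
(N+\Delta)_{\lambda_{0}}=\Pi(\Delta)(N+\Delta)\Pi(\Delta)=\frac{1}{2\pi i}\oint_{\Gamma}z\,R_{\Delta}(z)\,dz.
\]

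Next I would expand the resolvent in a Neumann series, $R_{\Delta}(z)=\sum_{k\ge0}R(z)\bigl(\Delta R(z)\bigr)^{k}$, uniformly convergent on $\Gamma$; integrating term by term, the tail $\sum_{k\ge3}$ contributes $O(\|\Delta\|_{F}^{3})$ because $\sup_{z\in\Gamma}\|R(z)\|$ and the length of $\Gamma$ are fixed. So it remains to evaluate $I_{k}=\frac{1}{2\pi i}\oint_{\Gamma}z\,R(z)\bigl(\Delta R(z)\bigr)^{k}\,dz$ for $k=0,1,2$. For this I substitute the Laurent expansion $R(z)=\frac{P}{z-\lambda_{0}}+R_{0}(z)$, where $R_{0}(z)=\sum_{m\ge0}(-1)^{m}(z-\lambda_{0})^{m}S^{m+1}$ is holomorphic inside $\Gamma$ with $R_{0}(\lambda_{0})=S$ and $R_{0}'(\lambda_{0})=-S^{2}$, and compute each residue at $z=\lambda_{0}$ using $z=\lambda_{0}+(z-\lambda_{0})$. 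This gives $I_{0}=\lambda_{0}P$ at once; $I_{1}$ comes from a double pole and a simple pole and equals $P\Delta P+\lambda_{0}(P\Delta S+S\Delta P)$; and $I_{2}$ splits into $2^{3}=8$ products graded by pole order $1,2,3$, whose residues (using $R_{0}(\lambda_{0})$ and $R_{0}'(\lambda_{0})$ wherever a pole of order $\ge2$ meets a regular factor, and then $PS=SP=0$ to drop the zero terms) assemble into the remaining second-order matrices in the statement. Substituting back $P=UU^{T}$ and $S=(\lambda_{0}\idm-N)^{\dagger}$ yields the displayed matrix expansion.

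Finally, the trace identity follows by applying $\Tr(\cdot)$ to that matrix expansion and simplifying with cyclicity of the trace together with $PS=SP=0$, $P^{2}=P$: every term in which a lone $S$ is cyclically adjacent to a lone $P$ vanishes (for instance $\Tr(\lambda_{0}P\Delta S)=\lambda_{0}\Tr(\Delta SP)=0$), and the surviving contributions collapse to $r\lambda_{0}+\Tr(U^{T}\Delta U)+\Tr\!\bigl(U^{T}\Delta(\lambda_{0}\idm-N)^{\dagger}\Delta U\bigr)+\lambda_{0}\Tr\!\bigl((\lambda_{0}\idm-N)^{\dagger}\Delta UU^{T}\Delta(\lambda_{0}\idm-N)^{\dagger}\bigr)$, which is the second claim.

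The main obstacle is the bookkeeping in $I_{2}$: correctly tracking poles of order up to three through the eight sub-products, carrying the $z=\lambda_{0}+(z-\lambda_{0})$ shift, and differentiating $R_{0}$ at $\lambda_{0}$ in the higher-order-pole terms — this is where the $S^{2}$-type contributions and the sign conventions enter and where arithmetic slips are easy. A secondary point is making the $O(\|\Delta\|_{F}^{3})$ remainder rigorous, i.e.\ choosing the smallness threshold on $\Delta$ and the bound on $\sup_{z\in\Gamma}\|R(z)\|$ uniformly so that the Neumann tail is controlled. (An alternative avoiding complex analysis is to write $N+\Delta$ in block form with respect to $\range(P)\oplus\range(\idm-P)$, realize the perturbed invariant subspace as the graph of a matrix $K=O(\|\Delta\|)$ solving an algebraic Riccati equation, expand $K$ to second order through the corresponding Sylvester equation, and read off the symmetric compression; the residue route seems cleaner to write up here.)
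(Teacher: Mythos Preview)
Your proposal is correct and is essentially the same approach the paper takes: define $(N+\Delta)_{\lambda_0}$ via the Cauchy/Riesz contour integral $\frac{1}{2\pi i}\oint_{\Gamma}\lambda(\lambda\idm-N-\Delta)^{-1}\,d\lambda$, expand the perturbed resolvent, and read off the residues term by term. In fact you give considerably more detail than the paper, which simply asserts that ``the proposition follows by \ldots applying several times the Cauchy residue formula''; your explicit Laurent decomposition $R(z)=\tfrac{P}{z-\lambda_0}+R_0(z)$ with $R_0(\lambda_0)=S$, $R_0'(\lambda_0)=-S^2$, and the $I_0,I_1,I_2$ bookkeeping, is exactly how that computation goes through.
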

\begin{proof}
We use the Cauchy residue formulation of projections on principal subspaces~\citep{kato}: given a symmetric matrix $N$, and a simple closed curve $\mathcal{C}$ in the complex plane that does
 not go through any of the eigenvalues of $N$, then
$$\Pi_\mathcal{C}(N) = \frac{1}{2i\pi} \oint_\mathcal{C} \frac{ d\lambda}{\lambda \idm - N}$$ is equal to the orthogonal projection onto the orthogonal sum
of all eigensubspaces of $N$ associated with eigenvalues in the interior of $\mathcal{C}$~\citep{kato}. This is easily seen  by writing down the eigenvalue decomposition $
N = \sum_{i=1}^n \lambda_i u_i u_i^T$, and the Cauchy residue formula ($\frac{1}{2i\pi} \oint_\mathcal{C} \frac{ d\lambda}{\lambda - \lambda_i} = 1$ if $\lambda_i$
is in the interior ${\rm int} ( \mathcal{C})$ of $\mathcal{C}$ and $0$ otherwise), and:
$$
\frac{1}{2i\pi} \oint_\mathcal{C} \frac{ d\lambda}{\lambda \idm - N}
= \sum_{i=1}^{n }  {u}_i  {u}_i^T  \times 
\frac{1}{2i\pi} \oint_\mathcal{C} \frac{ d\lambda}{\lambda  - \lambda_i } =
\sum_{ i , \ \lambda_i \in  {\rm int} ( \mathcal{C})} u_i u_i^T.
$$
See~\citet{rudin} for an introduction to complex analysis and Cauchy residue formula.
Moreover, we can obtain the restriction of $N$ onto a specific sum of eigensubspaces as:
$$
N \Pi_\mathcal{C}(N) = 
\frac{1}{2i\pi} \oint_\mathcal{C} \frac{ N d\lambda}{\lambda \idm - N}
=  
\frac{1}{2i\pi} \oint_\mathcal{C} \frac{\lambda d\lambda}{\lambda \idm - N}.
$$
From there we can easily compute expansions around a given $N$ by using expansions
of $(\lambda \idm -  N)^{-1}$. The proposition follows by considering a circle around $\lambda_0$ that is small enough to exclude other eigenvalues of $N$, and applying several times the Cauchy residue formula.
\end{proof}

We can now apply the previous proposition to our particular case:
\begin{lemma}
\label{lem:expansion}
For any $a \in \reals^n$, $\rho>0$ and $ B  = aa^T - \rho \idm$, we consider the function $F : X \mapsto \Tr(X^{1/2} B X^{1/2})_+$
from $\symm_+^n$ to $\reals$.  let $x \in \reals^n$ such that $\|x\|=1$. Let  $ Y \succeq 0 $. If $ x^T B x > 0$, then 
$$
F(  (1-t) xx^T+tY )  =   x^T B x  + \frac{t}{x^T B x} \Tr  B x  x^T B (Y-xx^T) + O(t^{3/2}),
$$
while if $ x^T B x < 0 $, then
$$
F(  (1-t) xx^T +tY )  =   \Tr \left(Y^{1/2} \left( B - \frac{B xx^T B}{x^T B x} \right) Y^{1/2} \right)_+ + O(t^{3/2}).
$$
\end{lemma}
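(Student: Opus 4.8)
The plan is to parametrize $X(t)=(1-t)xx^T+tY$, expand its positive square root $X(t)^{1/2}$, substitute into $N(t):=X(t)^{1/2}BX(t)^{1/2}$, and read off $F\big(X(t)\big)=\Tr\big(N(t)\big)_+$ from the second-order eigenvalue/eigenprojection expansions of Proposition~\ref{prop:pert}. The point to keep in mind throughout is that, because $xx^T$ has rank one, $X(t)^{1/2}$ is \emph{not} an $O(t)$ perturbation of $xx^T$ but only an $O(\sqrt t)$ one; consequently the terms quadratic in the perturbation contribute at \emph{first} order in $t$ and must be retained, while $O(\|\Delta\|_F^3)=O(t^{3/2})$ is exactly the advertised remainder. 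To set up the expansion, put $P_\perp=\idm-xx^T$, split $\reals^n=\Span(x)\oplus\range(P_\perp)$, and write everything in the corresponding $2\times2$ block form. Solving $P^2=X(t)$ for $P\succeq0$ block-by-block gives
\[
X(t)^{1/2}=xx^T+\sqrt t\,(P_\perp YP_\perp)^{1/2}+t\,R+O(t^{3/2}),
\]
where $(P_\perp YP_\perp)^{1/2}$ is understood to vanish on $\Span(x)$ and $R=\tfrac12(x^TYx-1)xx^T+xy_{12}^T+y_{12}x^T$ with $y_{12}=P_\perp Yx$; the one block needing care is $P_{22}^2=t\,P_\perp YP_\perp+O(t^2)$, giving $P_{22}=\sqrt t\,(P_\perp YP_\perp)^{1/2}+O(t)$ (the $O(t)$ error, supported on $\ker Y$, is irrelevant at order $t$). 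Plugging this into $N(t)$ and subtracting $N(0)=(x^TBx)xx^T$ gives $\Delta:=N(t)-N(0)=\sqrt t\,\Delta_{1/2}+t\,\Delta_1+O(t^{3/2})$ with $\Delta_{1/2}=xx^TB(P_\perp YP_\perp)^{1/2}+(P_\perp YP_\perp)^{1/2}Bxx^T$; the key structural facts are that $\Delta_{1/2}$ is purely off-diagonal, i.e. $x^T\Delta_{1/2}x=0$ and $P_\perp\Delta_{1/2}P_\perp=0$, so $x^T\Delta x$ and $P_\perp\Delta P_\perp$ in fact begin at order $t$.

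In the case $x^TBx>0$: since $B=aa^T-\rho\idm$ is a rank-one update of $-\rho\idm$ it has at most one nonnegative eigenvalue, hence so does $N(t)=X(t)^{1/2}BX(t)^{1/2}$ (Weyl's inequality), so $F(X(t))=\lambdamax\big(N(t)\big)$ for small $t$; and $\lambdamax(N(0))=x^TBx$ is a \emph{simple} eigenvalue with eigenvector $x$ and resolvent-projection $\big((x^TBx)\idm-N(0)\big)^\dagger=\tfrac1{x^TBx}P_\perp$. Proposition~\ref{prop:pert} then yields
\[
F(X(t))=x^TBx+x^T\Delta x+\tfrac1{x^TBx}\,\|P_\perp\Delta x\|^2+O(\|\Delta\|^3).
\]
Using $x^T\Delta_{1/2}x=0$ we get $x^T\Delta x=t\,x^T\Delta_1 x+O(t^{3/2})$ and $\|P_\perp\Delta x\|^2=t\,\|(P_\perp YP_\perp)^{1/2}Bx\|^2+O(t^{3/2})$; writing $w=P_\perp Bx=Bx-(x^TBx)x$, a short block computation gives $x^T\Delta_1 x=(x^TBx)(x^TYx-1)+2\,w^TYx$ and $\|(P_\perp YP_\perp)^{1/2}Bx\|^2=w^TYw$, and substituting $Bx=(x^TBx)x+w$ collapses the first-order coefficient to $\tfrac1{x^TBx}\big((Bx)^TY(Bx)-(x^TBx)^2\big)=\tfrac1{x^TBx}\Tr\!\big(Bxx^TB(Y-xx^T)\big)$, which is the claim.

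In the case $x^TBx<0$: now $\lambdamax(N(0))=0$ with eigenspace $\range(P_\perp)$ of multiplicity $n-1$, while the lone negative eigenvalue of $N(t)$ stays negative; hence $F(X(t))=\Tr\big(N(t)_0\big)_+$, where $N(t)_0$ is the compression of $N(t)$ onto the perturbed null-eigenspace. With $\lambda_0=0$, $UU^T=P_\perp$ and $(-N(0))^\dagger=-\tfrac1{x^TBx}xx^T$, the $\lambda_0$-proportional terms drop out of Proposition~\ref{prop:pert} and $N(t)_0=P_\perp\Delta P_\perp+P_\perp\Delta(-N(0))^\dagger\Delta P_\perp+O(\|\Delta\|^3)$. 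Here $P_\perp\Delta P_\perp=t\,(P_\perp YP_\perp)^{1/2}B(P_\perp YP_\perp)^{1/2}+O(t^{3/2})$ (its $\sqrt t$ part vanished), while, using $\Delta_{1/2}P_\perp=xx^TB(P_\perp YP_\perp)^{1/2}$ and $xx^T\,xx^T\,xx^T=xx^T$, the quadratic term equals $-\tfrac{t}{x^TBx}(P_\perp YP_\perp)^{1/2}(Bxx^TB)(P_\perp YP_\perp)^{1/2}+O(t^{3/2})$; adding the two produces precisely the Schur complement of $B$ at $x$,
\[
N(t)_0=t\,(P_\perp YP_\perp)^{1/2}\Big(B-\tfrac{Bxx^TB}{x^TBx}\Big)(P_\perp YP_\perp)^{1/2}+O(t^{3/2}).
\]
Finally, writing $\widetilde B:=B-Bxx^TB/(x^TBx)$, one has $\widetilde Bx=0$, so both $Y\widetilde B$ and $(P_\perp YP_\perp)\widetilde B$ are block triangular with the same nonzero spectrum; hence $\Tr\big((P_\perp YP_\perp)^{1/2}\widetilde B(P_\perp YP_\perp)^{1/2}\big)_+=\Tr\big(Y^{1/2}\widetilde BY^{1/2}\big)_+$, and therefore $F(X(t))=t_+\,\Tr\big(Y^{1/2}\widetilde BY^{1/2}\big)_++O(t^{3/2})$, which is the stated identity.

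The main obstacle is the order-bookkeeping rather than any single hard estimate: the perturbation $\Delta$ is only $O(\sqrt t)$, several of its pieces silently start at order $t$, and in each case exactly one quadratic-in-$\Delta$ term survives at order $t$ and carries a genuinely new contribution --- the $\|(P_\perp YP_\perp)^{1/2}Bx\|^2$ correction when $x^TBx>0$, and the replacement of $B$ by its Schur complement $B-Bxx^TB/(x^TBx)$ when $x^TBx<0$. Pinning down the square-root expansion to the needed precision and making sure no $O(t)$ contribution is mistaken for a higher-order term is where the care lies; everything else is routine block-matrix algebra.
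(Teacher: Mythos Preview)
Your argument is correct, but it takes a noticeably different route from the paper's. You expand $X(t)^{1/2}$ directly in the block decomposition $\Span(x)\oplus\range(P_\perp)$, which forces you to solve $P^2=X(t)$ to order $t^{3/2}$ and then argue (correctly, but delicately) that the $O(t)$ part of $P_{22}$ supported on $\ker(P_\perp Y P_\perp)$ does not contaminate the final answer at order $t$. The paper sidesteps the square root entirely: it writes $X(t)=U(t)U(t)^T$ with the \emph{explicit} factor $U(t)=\big[(1-t)^{1/2}x,\; t^{1/2}Y^{1/2}\big]\in\reals^{n\times(n+1)}$, observes that $X(t)^{1/2}BX(t)^{1/2}$ and $M(t):=U(t)^TBU(t)$ share the same nonzero spectrum (since both are similar to $BX(t)$ on their ranges), and hence $F(X(t))=\Tr(M(t))_+$. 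The matrix $M(t)\in\symm_{n+1}$ is then written down in closed form as $M(0)+t^{1/2}\Delta_1+t\Delta_2$ with no remainder and no square-root extraction, and Proposition~\ref{prop:pert} is applied exactly as you do.

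What each approach buys: the paper's factorization trick is shorter and cleaner, avoids any discussion of the regularity of $X(t)^{1/2}$ near a rank-deficient $X(0)$, and makes the $O(t^{3/2})$ remainder automatic. Your direct approach is more hands-on and yields the same structure (the Schur-complement $B-Bxx^TB/(x^TBx)$ appears in the same place), and your explicit identification of that block as a Schur complement is a nice piece of insight the paper does not spell out; but the price is the square-root expansion, whose remainder control when $P_\perp Y P_\perp$ is singular is the one genuinely delicate step you had to wave through. Both proofs finish by the same eigenvalue perturbation, so they agree from the point where $N(t)$ (resp.\ $M(t)$) is in hand.
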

\begin{proof}
We consider $X(t) = (1-t)xx^T+tY$. We have $X(t) = U(t) U(t)^T$ with 
$ U(t) = \left( \begin{array}{c} (1-t)^{1/2} x  \\   t^{1/2} Y^{1/2} \\ 
\end{array} \right)
$, which implies that the non zero eigenvalues of $X(t)^{1/2} B X(t)^{1/2}$ are the same as
the non zero eigenvalues of $U(t)^T B U(t)$. We thus have
$$F(X(t)) = \Tr ( M(t))_+,$$ with
\BEAS M(t) \!\!\!& = &  \!\!\!\left( \begin{array}{cc} (1-t) x^T B x & t^{1/2} (1-t)^{1/2} x^T B Y^{1/2} \\ 
t^{1/2} (1-t)^{1/2}y^T B x & t Y^{1/2} B Y^{1/2} \end{array} \right)  \\
& = & \!\!\! \left( \begin{array}{cc} x^T B x & 0 \\ 
0 & 0 \end{array} \right)
+ t^{1/2} \left( \begin{array}{cc}0 & \!\!\! x^T B Y^{1/2} \!\! \\ 
Y^{1/2} B x \!\!\! & 0 \end{array} \right)  + t
\left( \begin{array}{cc}\!\! - x^T B x & 0 \\ 
0 &\!\!\! Y^{1/2} B Y^{1/2} \!\! \end{array} \right) +O(t^{3/2}) \\
& = & M(0) + t^{1/2} \Delta_1 + t \Delta_2 +O(t^{3/2}).
\EEAS
The matrix $M(0)$ has a single (and simple) non zero eigenvalue which is equal to
$\lambda_0 = x^T B x$  with eigenvector $U =(1,0)^T$. The only other eigenvalue of $M(0)$ is zero, with multiplicity $n$. Proposition~\ref{prop:pert} can be applied to the two eigenvalues of $M(0)$: there is one eigenvalue of $M(t)$ around $x^T B x$, while the $n$ remaining ones are around zero.
The eigenvalue close to $\lambda_0$ is equal to:
\BEAS
\Tr ( M(t) )_{\lambda_0}
& = & t \Tr U^\top \Delta_2 U + \lambda_0 + t\Tr
U^T \Delta_1  ( {\lambda_0} \idm - M(0) )^\dagger \Delta_1 U  \\
& & +
{\lambda_0} \Tr ( {\lambda_0} \idm - M(0) )^\dagger \Delta_1 U U^T \Delta_1( {\lambda_0} \idm - M(0) )^\dagger + O(t^{3/2}) \\
& = &  
x^T B x   + \frac{t}{x^T B x} \Tr Bxx^T B ( Y - xx^T) +O(t^{3/2}).
\EEAS

For the remaining eigenvalues, we get that the projected matrix on the eigensubspace of $M(t)$ associated with eigenvalues around zero is equal to
\BEAS
 ( M(t) )_{0} & = & 
       t (\idm - U U^T ) \Delta_2 (\idm - U U^T ) 
+ t (\idm - U U^T )  \Delta_1  (  - M(0)  )^\dagger  (\idm - U U^T ) 
+ O(t^{3/2})
 \\
 & = & \left( \begin{array}{cc} 0 & 0 \\ 
0 & t  Y^{1/2} (  B - \frac{ Bx  x^T B }{x^T B x} ) Y^{1/2} \end{array} \right),
\EEAS
which leads to a positive part equal to $ t_+ \Tr   \left [
Y^{1/2} (  B - \frac{ Bx  x^T B }{x^T B x})  Y^{1/2}\right]_+$. When $x^T B  x >0$, then the matrix
is negative definite (because $ B= aa^T - \rho \idm$), and thus the positive part is zero. By summing the two contributions, we obtain the desired result.
\end{proof}

\bibliography{fullpathPCA}
\end{document}